\title{Practical $0.385$-Approximation for Submodular Maximization Subject to a Cardinality Constraint}
\author{Murad Tukan
  \\
  DataHeroes Israel\\
  \texttt{murad@dataheroes.ai} \\
  \And 
  Loay Mualem \\
  Department of Computer Science\\
  University of Haifa\\
  Haifa Israel\\
  \texttt{loaymual@gmail.com}\\  
  \And Moran Feldman\\
  Department of Computer Science\\
  University of Haifa\\
  Haifa Israel\\
  \texttt{moranfe@cs.haifa.ac.il}\\
}
\newcommand{\say}[1]{``#1''}
\newcommand{\term}[1]{\left( #1 \right)}
\newcommand{\f}[1]{f\term{#1}}
\newcommand{\eps}{\varepsilon}
\newcommand{\OPT}{\mathbb{O}\mathbb{P}\mathbb{T}}
\newcommand{\br}[1]{\left\lbrace #1 \right\rbrace}
\newcommand{\abs}[1]{\left| #1 \right|}
\DeclarePairedDelimiter\ceil{\lceil}{\rceil}
\DeclarePairedDelimiter\floor{\lfloor}{\rfloor}
\newtheorem{theorem}{Theorem}[section]
\newtheorem{lemma}[theorem]{Lemma}
\DeclareMathOperator*{\argmax}{arg\,max}
\DeclareMathOperator*{\argmin}{arg\,min}
\newcommand{\REAL}{\mathbb{R}}
\newcommand{\T}[1]{T_\lambda\term{#1}}
\newcommand{\sterm}[1]{\left[ #1 \right]}
\newcommand{\E}[1]{\mathbb{E}\sterm{#1}}
\newcommand{\lovasz}[1]{\hat{f}\term{#1}}
\newcommand{\x}[1]{\mathbf{x}_{#1}}
\newcommand{\norm}[1]{\left\| #1 \right\|}
\newcommand{\bigO}[1]{\mathcal{O}\term{#1}}
\newcommand{\localsearch}{\textsc{Local-Search}}
\newcommand{\flocalsearch}{\textsc{Fast-Local-Search}}
\newcommand{\Stochasticgreedy}{\textsc{Aided-Measured Discrete Stochastic Greedy}}
\newcommand{\pr}[1]{\mathrm{Pr}\term{#1}}
\newcommand{\defcal}[1]{\expandafter\newcommand\csname c#1\endcsname{{\mathcal{#1}}}}
\newcommand{\defbb}[1]{\expandafter\newcommand\csname b#1\endcsname{{\mathbb{#1}}}}
\newcommand{\defvec}[1]{\expandafter\newcommand\csname v#1\endcsname{{\mathbf{#1}}}}
\newcommand{\defmat}[1]{\expandafter\newcommand\csname m#1\endcsname{{\mathbf{#1}}}}
\begin{document}

\maketitle

\begin{abstract}
Non-monotone constrained submodular maximization plays a crucial role in various machine learning applications. However, existing algorithms often struggle with a trade-off between approximation guarantees and practical efficiency. The current state-of-the-art is a recent $0.401$-approximation algorithm, but its computational complexity makes it highly impractical. The best practical algorithms for the problem only guarantee $1/e$-approximation. In this work, we present a novel algorithm for submodular maximization subject to a cardinality constraint that combines a guarantee of $0.385$-approximation with a low and practical query complexity of $O(n+k^2)$. Furthermore, we evaluate the empirical performance of our algorithm in experiments based on various machine learning applications, including Movie Recommendation, Image Summarization, and more. These experiments demonstrate the efficacy of our approach.
\end{abstract}

\section{Introduction}

In the last few years, the ability to effectively summarize data has gained importance due to the advent of massive datasets in many fields. Such summarization often consists of selecting a small representative subset from a large corpus of images, text, movies, etc. Without a specific structure, this task can be as challenging as finding a global minimum of a non-convex function. Fortunately, many practical machine learning problems exhibit some structure, making them suitable for optimization techniques (either exact or approximate).

A key structure present in many such problems is submodularity, also known as the principle of diminishing returns. This principle suggests that the incremental value of an element decreases as the set it is added to grows. Submodularity enables the creation of algorithms that can provide near-optimal solutions, making it fundamental in machine learning. It has been successfully applied to various tasks, such as social graph analysis~\cite{norouzi2018beyond}, adversarial attacks~\cite{lei2019discrete,mualem2024submodular}, dictionary learning~\cite{das2011submodular}, data summarization~\cite{mitrovic2018data,mualem2023resolving,mualem2024bridging}, interpreting neural networks~\cite{elenberg2017streaming}, robotics~\cite{zhou2022risk,tukan2023orbslam3}, and many more. 

To exemplify the notion of submodularity, consider the following task. Given a large dataset, our goal is to identify a subset that effectively summarizes (or covers) the data, with a good representative set being one that covers the majority of the data. Note that adding an element $s$ to a set $A$ is less beneficial to this goal than adding it to a subset $B \subset A$ due to the higher likelihood of overlapping coverage. Formally, if $\mathcal{N}$ is the set of elements in the dataset, and we define a function $f\colon 2^{\mathcal{N}} \rightarrow \mathbb{R}$ mapping every set of elements to its coverage, then, the above discussion implies that, for every two sets $A \subseteq B \subseteq \mathcal{N}$ and element $s \in \mathcal{N} \setminus B$, it must hold that $f(s \mid A) \geq f(s \mid B)$, where $f(s \mid A) \triangleq f(\{s\} \cup A) - f(A)$ denotes the marginal gain of the element $s$ to the set $A$. We say that a set function is \emph{submodular} if it obeys this property.

Unfortunately, maximizing submodular functions is NP-hard even without a constraint~\cite{feige2011maximizing}, and therefore, works on maximization of such functions aim for  approximations. Many of these works make the extra assumption that the submodular function $f\colon 2^{\mathcal{N}} \to \mathbb{R}$ is \emph{monotone}, i.e., that for every two sets $A\subseteq B\subseteq \mathcal{N}$, it holds that $f(B)\geq f(A)$. Two of the first works of this kind, by Nemhauser and Wolsey~\cite{nemhauser1978best} and Nemhauser et al.~\cite{nemhauser1978analysis}, showed that a greedy algorithm achieves a tight $1-\nicefrac{1}{e}$ approximation for the problem of maximizing a non-negative monotone submodular function subject to a cardinality constraint using $O(nk)$ function evaluations, where $n$ is the size of the ground set $\mathcal{N}$ and $k$ is the maximum cardinality allowed for the output set. An important line of work aimed to improve over the time complexity of the last algorithm, culminating with various deterministic and randomized algorithms that have managed to reduce the time complexity to linear at the cost of an approximation guarantee that is worse only by a $1 - \eps$ factor~\cite{buchbinder2017comparing,mirzasoleiman2015lazier,li2022submodular,kuhnle2021quick,huang2022multi}. 

Unfortunately, the submodular functions that arise in machine learning applications are often non-monotone, either because they are naturally non-monotone, or because a diversity-promoting non-monotone regularizer is added to them. 
Maximizing a non-monotone submodular function is challenging. The only tight approximation known for such functions is for the case of unconstrained maximization, which enjoys a tight approximation ratio of $\nicefrac{1}{2}$~\cite{feige2011maximizing}. A slightly more involved case is the problem of maximizing a non-negative (not necessarily monotone) submodular function subject to a cardinality constraint. This problem has been studied extensively, first, Lee et al.~\cite{lee2009nonmonotone} suggested an algorithm guaranteeing $(1/4 - \eps)$-approximation for this problem. This approximation ratio was improved in a long series of works~\cite{buchbinder2014submodular,chekuri2014submodular,ene2016constrained,vondrak2013symmetry}, leading to the very recent $0.401$-approximation algorithm of Buchbinder and Feldman~\cite{buchbinder2023constrained}, which improved over a previous $0.385$-approximation algorithm due to Buchbinder and Feldman~\cite{buchbinder2019constrained}. On the inapproximability side, it has been shown that no algorithm can guarantee a better approximation ratio than $0.478$ in polynomial time~\cite{qi2022maximizing}.

Most of the results in the above-mentioned line of work are only of theoretical interest due to a very high time complexity. The two exceptions are the Random Greedy algorithm of Buchbinder et al.~\cite{buchbinder2014submodular} that guarantees $\nicefrac{1}{e}$-approximation using $O(nk)$ queries to the objective function, and the Sample Greedy algorithm of Buchbinder et al.~\cite{buchbinder2017comparing} that reduces the query complexity to $O_\eps(n)$ at the cost of a slightly worse approximation ratio of $\nicefrac{1}{e} - \eps$.

\subsection{Our contribution}
In this work, we introduce a novel combinatorial algorithm for maximizing a non-negative submodular function subject to a cardinality constraint. Our suggested method combines a practical query complexity of $O(n+k^2)$ with an approximation guarantee of $0.385$, which improves over the $\nicefrac{1}{e}$-approximation of the state-of-the-art practical algorithm. To emphasize the effectiveness of our suggested method, we evaluate it on $5$ applications: (i) Movie Recommendation, (ii) Image Summarization, and (iii) Revenue Maximization.
Our experiments demonstrate that Algorithm~\ref{alg:faster_local_search} outperforms the current practical state-of-the-art algorithms.

\textbf{Remark.} An independent work that recently appeared on arXiv~\cite{chen2024guided} suggests another $0.385$-approximation algorithm for our problem using $O(nk)$ oracle queries. Interestingly, their algorithm is very similar to a basic version of our algorithm presented in the appendix. In this work, our main goal was to find ways to speed up this basic algorithm, which leads to our main result. In contrast, the main goal of~\cite{chen2024guided} was to derandomize the basic algorithm and extend it to other constraints.

\section{Preliminaries}

In this section, we define some additional notation used throughout the paper. Given an element $u \in \mathcal{N}$ and a set $S \subseteq \mathcal{N}$, we use $S + u $ and $S - u$ as shorthands for $S \cup \{u\}$ and $S \setminus \{u\}$. 
Given a set function $f\colon 2^\mathcal{N} \to \mathbb{R}$, we use $f(u \mid S)$ to denote the marginal contribution of $u$ to $S$. 
Similarly, given an additional set $T \subseteq \mathcal{N}$, we define $f(T \mid S) \triangleq f(S \cup T) - f(S)$.

\section{Method}\label{sec:main_result}
In this section, we present our algorithm for non-monotone submodular maximization under cardinality constraints, which is the algorithm used to prove the main theoretical result of our work (Theorem~\ref{thm:main}). We begin with a brief overview of our algorithm.

Motivated by the ideas underlying the impractical $0.385$-approximation algorithm of~\cite{buchbinder2019constrained}, our algorithm comprises three steps:

\begin{enumerate}
    \item \textbf{Initial Solution:} We start by searching for a good initial solution that guarantees a constant approximation to the optimal set. This is accomplished by running the Sample Greedy algorithm of~\cite{buchbinder2017comparing} $O(\log \eps^{-1})$ times, and selecting the solution with the highest function value.

    \item \textbf{Accelerated Local Search (Algorithm~\ref{alg:faster_local_search}):} Next, the algorithm aims to find an (approximate) local optimum set $Z$ using a local search method. This can be done using a classical local search algorithm, at the cost of $O(nk^2)$ queries (see Appendix~\ref{alg:local_search} for more detail). As an alternative, we introduce, in Subsection~\ref{subsection:fastlocal}, our accelerated local search algorithm {\flocalsearch} (Algorithm~\ref{alg:faster_local_search}), which reduces the query complexity to $O(n+k^2)$.

    \item \textbf{Accelerated Stochastic Greedy Improvement (Algorithm~\ref{alg:main_lazier_than_lazy_greedy}):} After obtaining the set $Z$, the algorithm attempts to improve the solution using a stochastic greedy algorithm. By using for this purpose a version of the Random Greedy algorithm suggested by Buchbinder et al.~\cite{buchbinder2014submodular}, one could get our target approximation ratio of $0.385$ using $O(nk)$ queries (we refer the reader to Appendix~\ref{subsection:fastlocal} for the complete proof). To get the same result using fewer queries, we employ Algorithm~\ref{alg:main_lazier_than_lazy_greedy} (described in Subsection~\ref{subsection:guided}), which is accelerated using ideas borrowed from the Sample Greedy Algorithm of~\cite{buchbinder2017comparing}.
    
\end{enumerate}

Our final algorithm (given as Algorithm~\ref{alg:385_fast} in Subsections~\ref{ssc:main}) returns the better among the two sets produced in the last two steps (i.e., the output sets of Algorithm~\ref{alg:faster_local_search}, and Algorithm~\ref{alg:main_lazier_than_lazy_greedy}).

\subsection{Fast local search}\label{subsection:fastlocal}
In this section, we present our accelerated local search algorithm, which is the algorithm used to implement the first two steps of our main algorithm. The properties of this algorithm are formally given by Theorem~\ref{thm:fast_local_search}. Let $\OPT$ be an optimal solution.

\begin{restatable}{theorem}{thmfastlocalsearch} 
\label{thm:fast_local_search}
There exists an algorithm that given a positive integer $k$, a value $\eps \in (0, 1)$, and a non-negative submodular function $f\colon 2^{\mathcal{N}} \to \mathbb{R}$, outputs a set $S \subseteq \mathcal{N}$ of size at most $k$ that, with probability at least $1-\eps$, obeys
\[\f{S} \geq \frac{\f{S \cap \OPT} + \f{S \cup \OPT}}{2 + \eps} \quad \text{and} \quad  \f{S} \geq \frac{\f{S \cap \OPT}}{1 + \eps}\enspace.\]
Furthermore, the query complexity of the above algorithm is $O_{\eps}(n + k^2)$. 
\end{restatable}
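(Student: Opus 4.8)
The plan is to build the claimed algorithm out of the two ingredients advertised in the overview: the Sample Greedy procedure of~\cite{buchbinder2017comparing} to produce a good starting point, followed by an accelerated local search that drives the solution to an (approximate) local optimum with respect to single-element swaps. Concretely, I would first run Sample Greedy $O(\log \eps^{-1})$ independent times and keep the best output; since each run produces a set whose expected value is at least a constant times $\f{\OPT}$ (and the function value is bounded), a standard Markov-type amplification argument shows that with probability at least $1 - \eps/2$ we obtain an initial set $S_0$ with $\f{S_0} \geq c \cdot \f{\OPT}$ for some absolute constant $c > 0$. The cost of this phase is $O_\eps(n)$ queries. This constant-factor head start is what later lets the local search terminate after few improvement steps.

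Next, starting from $S_0$, I would run the local search of Algorithm~\ref{alg:faster_local_search}: repeatedly look for a swap — adding an element, deleting an element, or exchanging one element for another (subject to $|S| \le k$) — that increases $\f{S}$ by at least a multiplicative $(1 + \delta/k)$ factor for an appropriately chosen $\delta = \Theta(\eps)$, and perform it; stop when no such improving swap exists. Because every accepted step multiplies the value by $1 + \delta/k$ and the value is bounded by $\f{\OPT}/c$ while starting at $\ge c\f{\OPT}$, the number of steps is $O((k/\delta)\log(1/c)) = O_\eps(k)$. The key is to implement each step cheaply. A naive scan of all candidate swaps costs $O(nk)$ per step; instead, following the Sample Greedy philosophy, at each step I would examine only a random sample of the elements (of size $O_\eps(n/k)$ for insertions/exchanges into the current set, combined with an $O(k)$ scan of the elements currently in $S$ for deletions/the in-set side of exchanges), and argue that with suitable probability an approximately-best available swap is found. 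Summing over $O_\eps(k)$ steps gives $O_\eps(n + k^2)$ queries; a union bound over all steps and a final boosting of the sampling confidence keeps the total failure probability below $\eps/2$, so the overall success probability is $\ge 1 - \eps$.

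It remains to convert the approximate-local-optimality of the returned set $S$ into the two claimed inequalities. This is the classical local-search analysis for non-monotone submodular maximization. For the first inequality, consider the elements of $\OPT \setminus S$ and $S \setminus \OPT$ and the swaps between them (plus the pure insertions of $\OPT \setminus S$ when $|S| < k$, and pure deletions of $S \setminus \OPT$): local optimality means each such swap gains at most $\frac{\delta}{k}\f{S}$, and by submodularity summing these inequalities (with each element of $S$ "blamed" by at most one swap) yields $2\f{S} + O(\delta)\f{S} \ge \f{S \cap \OPT} + \f{S \cup \OPT}$, which is exactly $\f{S} \ge \frac{\f{S\cap\OPT} + \f{S\cup\OPT}}{2+\eps}$ after absorbing constants into $\eps$. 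The second inequality, $\f{S} \ge \f{S \cap \OPT}/(1+\eps)$, follows from the deletion moves alone: for every $u \in S$, non-existence of a profitable deletion gives $f(u \mid S - u) \ge -\frac{\delta}{k}\f{S}$, and summing over $u \in S \setminus \OPT$ together with submodularity ($f(S \setminus \OPT \text{ removed from } S) \le \sum_{u \in S\setminus\OPT} f(u \mid S-u)$ in the appropriate direction) yields $\f{S} \ge \f{S \cap \OPT} - \delta\f{S}$.

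The main obstacle I anticipate is the sampling analysis of the accelerated local search step: one must show that examining only $O_\eps(n/k)$ random elements per iteration still lets us either find an improving swap or certify (approximate) local optimality, and that the errors introduced by sampling — accumulated over $\Theta_\eps(k)$ iterations — degrade the final guarantee only by a $1 + O(\eps)$ factor and fail only with probability $O(\eps)$. Handling the case $|S| < k$ (where pure insertions are allowed) versus $|S| = k$ (where one is restricted to exchanges) cleanly, and making sure the "each element of $S$ is blamed by at most one move" bookkeeping goes through in the sampled setting, is where the technical care is concentrated; everything else is routine submodularity manipulation and a geometric bound on the iteration count.
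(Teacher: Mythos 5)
Your overall architecture (Sample Greedy initialization boosted via Markov's inequality, a sampled single-swap local search, and the classical conversion of approximate local optimality into the two inequalities) matches the paper's, and your first and last steps are essentially the paper's own. The gap sits exactly where you flag "the main obstacle," and it is not a technicality: with a per-iteration sample of size $n/k$, an improving swap witnessed by only a single element of $\mathcal{N}\setminus S$ is found with probability only about $1/k$, so you cannot certify (approximate) local optimality of the current iterate by sampling, and no union bound over $\Theta_\eps(k)$ iterations repairs this. Relatedly, your termination rule (accept only swaps that improve the value by a $(1+\delta/k)$ factor and stop when none exists) cannot be implemented against the sampled oracle --- the algorithm never learns that no improving swap exists --- so your geometric $O_\eps(k)$ bound on the number of iterations does not apply to the accelerated algorithm as you describe it; it is the analysis of the slow $O_\eps(nk^2)$ variant in the appendix.

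The paper closes this gap with a different mechanism. Its algorithm accepts any strictly improving swap, runs for a \emph{fixed} number $L=O_\eps(k)$ of iterations, and returns a \emph{uniformly random} intermediate iterate. The key lemma is an averaging argument: if an iterate violates an explicit approximate-local-optimality condition (for every $t$, the best $t$ insertions gain at most the cheapest $t$ deletions plus $\eps\f{S}$), then there is a witness set $T_+$ of improving elements; the sample hits $T_+$ with probability at least $\term{1-\nicefrac{1}{e}}\abs{T_+}/k$, and conditioned on hitting it the expected gain is at least $c\eps\f{\OPT}/\abs{T_+}$, so every "bad" iteration makes expected progress $\Omega(c\eps\f{\OPT}/k)$. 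Since the total gain is at most $\f{\OPT}$, at most $O(k/(c\eps))$ iterations can be bad, and a random iterate among $L$ is good with probability at least $1/2$; repeating $O(\log\eps^{-1})$ times boosts this to $1-\eps$. The condition is then \emph{verified explicitly} on the chosen iterate (costing $O(n+k)$ queries, but only $O(\log\eps^{-1})$ times), and the two inequalities are derived from the verified condition rather than from an unobservable "no improving swap exists" state. This random-stopping-plus-explicit-verification idea is what your proposal is missing.
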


Note that the guarantee of Theorem~\ref{thm:fast_local_search} is similar to the guarantee of a classical local search algorithm (see Appendix~\ref{sec:warmup_methods} for details). However, such a classical local search algorithm uses $O_\eps(nk^2)$ queries, which is much higher than the number of queries required for the algorithm from Theorem~\ref{thm:fast_local_search}.

We defer the formal proof of Theorem~\ref{thm:fast_local_search} to Appendix~\ref{sec:proofs_main}. The proof of Theorem~\ref{thm:fast_local_search} is based on Algorithm~\ref{alg:faster_local_search}. Algorithm~\ref{alg:faster_local_search} implicitly assumes that the ground set $\mathcal{N}$ includes at least $2k$ dummy elements that always have a zero marginal contribution to $f$. Such elements can always be added to the ground set (before executing the algorithm) without affecting the properties of $f$, and removing them from the output set of the algorithm does not affect the guarantee of Theorem~\ref{thm:fast_local_search}.

\begin{algorithm}[!t]
    \DontPrintSemicolon
    \SetKwInOut{Input}{input}
    \SetKwInOut{Output}{output}

    \Input{A positive integer $k \geq 1$, a submodular function $f$, an approximation factor $\eps \in (0,1)$, and a number $L$ of iterations.}
    \Output{A subset of $\mathcal{N}$ of cardinality at most $k$.}

    Initialize $S_0$ to be a feasible solution that with probability at least $1 - \eps$ provides $c$-approximation for the problem for some constant $c \in (0, 1]$.\label{algLine:faster_local_search_init_S}\\
    Fill $S_0$ with dummy elements to ensure $\abs{S_0} = k$.\\
    \For{$j=1$ to $\ceil{\log_2{\frac{1}{\eps}}}$\label{line:attempts_loop}}{
        Let $S_0^j \gets S_0$.\\
        \For{$i = 1$ to $L$}{
            $Z^j_{i} \gets $ Sample $\frac{n}{k}$ items from $\mathcal{N}$ uniformly at random.\\
            $u^j_i \gets \argmax_{u^\prime \in Z^j_{i}} \f{S^j_{i-1} + u^j_i} - \f{S^j_{i-1}}$.\\
            \If{$\f{S^j_{i-1} + u^j_i} - \f{S^j_{i-1}} \leq 0$}{
                $u^j_i \gets $ dummy element that does not belong to $S^j_{i - 1}$.
            }
            $v^j_i \gets \argmin_{v^\prime \in S^j_{i-1}} \f{S^j_{i-1}} - \f{S^j_{i-1} - v^\prime}$. \label{algLine:faster_local_search_deletion}\\
            \lIf{$\f{S^j_{i-1}} < \f{S^j_{i-1} - v^j_i + u^j_i}$}{
                $S^j_{i} \gets S^j_{i-1} - v^j_i + u^j_i$.
            }
            \lElse{$S^j_{i} \gets S^j_{i-1}$.}
        }
        Pick a uniformly random integer $0 \leq i^* < L$.\\
        \If{for every integer $0 \leq t \leq k$ it holds that
        $\max\limits_{S \subseteq \mathcal{N} \setminus S^j_{i^*}, \abs{S} = t} \sum\nolimits_{u \in S} \f{u \mid S^j_{i^*}} \leq \min\limits_{S \subseteq S^j_{i^*}, \abs{S} = t} \sum\nolimits_{v \in S} \f{v \mid S^j_{i^*} - v} + \eps\f{S^j_{i^*}}$\label{line:condition}}{
            \Return $S^j_{i^*}$.
        }
    }
    \Return FAILURE.
    \caption{\flocalsearch$\term{k,f, \eps, L}$}
    \label{alg:faster_local_search}
\end{algorithm}

Algorithm~\ref{alg:faster_local_search} starts by finding an initial solution $S_0$ guaranteeing constant approximation. As mentioned above, this initial solution is found using repeated applications of the Sample Greedy algorithm of~\cite{buchbinder2017comparing}. If the size of the initial solution is less than $k$ (i.e., $|S_0| < k$), the algorithm adds to it $k - |S_0|$ dummy elements. Then, Algorithm~\ref{alg:faster_local_search} makes roughly $\log_2 \eps^{-1}$ attempts to find a good output. Each attempt  improves the initial solution using $L$ iterations consisting of three steps: In Step~(i), the algorithm samples $\frac{n}{k}$ items, and  picks the element $u$ from the sample with the largest marginal contribution to the current solution $S_{i-1}$. If there are no elements with a positive marginal contribution, the algorithm picks a dummy element outside $S_{i-1}$ as $u$. In Step~(ii), the algorithm picks the element $v \in S_{i-1}$ that has the lowest marginal value, i.e., the element whose removal from $S_{i-1}$ would lead to the smallest drop in value. In Step~(iii), the algorithm swaps the elements $u$ and $v$ if such a swap increases the value of the current solution. Once $L$ iterations are over, the algorithm picks a uniformly random solution among all the solutions seen during this attempt (recall that the algorithm makes roughly $\log_2  \eps^{-1}$ attempts to find a good solution). If the random solution found obeys the technical condition given on Line~\ref{line:condition}, then the algorithm returns it. Otherwise, the algorithm continues to the next attempt. If none of the attempts returns a set, the algorithm admits failure.

\subsection{Guided stochastic greedy}\label{subsection:guided}
In this section, we prove Theorem~\ref{thm:lazy_greedy_guarantee}, which provides the last step of our main algorithm.
\begin{restatable}{theorem}{lazierGreedy}
\label{thm:lazy_greedy_guarantee}
There exists an algorithm that given a positive integer $k$, a value $\eps \in (0, 1)$, a value $t_s \in [0, 1]$, a non-negative submodular function $f\colon 2^{\mathcal{N}} \to \mathbb{R}$, and a set $Z \subseteq \mathcal{N}$ obeying the inequalities given in Theorem~\ref{thm:fast_local_search}, outputs a solution $S_k$, obeying 
\begin{align*}
\E{\f{S_k}} &\geq \term{\frac{k - \ceil{t_s \cdot k}}{k} \alpha^{k - \ceil{t_s \cdot k} - 1} + \alpha^{k - \ceil{t_s \cdot k}} - \alpha^{k}}\f{\OPT} + \\
&\quad + \term{\alpha^k + \alpha^{k-1} - \frac{2k - \ceil{t_s \cdot k} - 1}{k} \alpha^{k - \floor{t_s \cdot k}}}\f{\OPT \cup Z} \\
&\quad + \alpha^k \f{S_0} + \term{\alpha^k - \alpha^{k - \ceil{t_s \cdot k}} } \f{\OPT \cap Z} \\
&\quad - 2\eps\term{1 - \alpha^k}\f{\OPT}.    
\end{align*}
Furthermore, this algorithm requires only $O_\eps(n)$ queries to the objective function.
\end{restatable}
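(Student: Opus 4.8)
The plan is to analyze the \Stochasticgreedy{} algorithm (Algorithm~\ref{alg:main_lazier_than_lazy_greedy}), which runs $k$ rounds of a measured-greedy process whose per-round maximization step is accelerated by drawing a fresh uniform sample $R_i$ of $\Theta_\eps(n/k)$ elements and taking the best candidate in $R_i$ (together with, during the first $\approx t_s k$ \emph{guided} rounds, the elements of $Z$). Since each round costs $O_\eps(n/k)$ oracle queries, the total is $O_\eps(n)$, which already gives the ``furthermore'' part. Writing $S_0\subseteq S_1\subseteq\dots\subseteq S_k$ for the intermediate solutions and $g_i\triangleq\E{\f{S_i}}$, the goal is to derive and unroll a linear recurrence for $g_i$. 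Throughout, $\alpha=1-\tfrac1k$, so $\alpha^k$ is the $\approx e^{-1}$ factor appearing in the statement.

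The first ingredient is a \emph{per-round progress bound}. By submodularity, for any candidate set $W$ of size at most $2k$ one has $\max_u\max\{\f{u\mid S_{i-1}},0\}\ge\tfrac1{|W|}\sum_{w\in W}\max\{\f{w\mid S_{i-1}},0\}\ge\tfrac1{|W|}\term{\f{S_{i-1}\cup W}-\f{S_{i-1}}}$, and the standard sub-sampling lemma (a uniform sample of size $\Theta_\eps(n/k)$ captures a near-best element relative to any fixed set, up to a $1-\eps$ factor) transfers this to $R_i$. Combining with the rule that the picked element enters $S_i$ with conditional probability at most $\tfrac1k$, I would obtain, for each round $i$, $g_i-g_{i-1}\ge\tfrac1k\term{\E{\f{S_{i-1}\cup W_i}}-g_{i-1}}-\tfrac{2\eps}{k}\f{\OPT}$, where $W_i=\OPT\cup Z$ on the guided rounds and $W_i=\OPT$ afterwards, and where the error term uses $\f{S_{i-1}\cup\OPT}\le\f{\OPT}+g_{i-1}\le 2\f{\OPT}$ (the last step since $S_{i-1}$ is feasible) to turn the relative sampling loss into an absolute one. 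Rearranged, this is $g_i\ge\alpha g_{i-1}+\tfrac1k\E{\f{S_{i-1}\cup W_i}}-\tfrac{2\eps}{k}\f{\OPT}$.

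The second ingredient controls $\E{\f{S_{i-1}\cup W_i}}$. Since each element of $\mathcal N$ is added in any single round with probability at most $\tfrac1k$, a round-by-round union bound gives $\pr{u\in S_i}\le 1-\alpha^i$ for every $u$, hence by the standard lemma $\E{\f{A\cup B}}\ge(1-p)\f{B}$ for a random set $A$ whose elements appear with probability at most $p$: on guided rounds $\E{\f{S_{i-1}\cup\OPT\cup Z}}\ge\alpha^{i-1}\f{\OPT\cup Z}$, and on later rounds $\E{\f{S_{i-1}\cup\OPT}}\ge\alpha^{i-1}\f{\OPT}$. The subtle point, and the source of the $\f{\OPT\cap Z}$ term and of the precise coefficients, is the transition at round $\ell\triangleq\ceil{t_s k}$: the solution $S_\ell$ has already ``moved toward'' $\OPT\cup Z$, so one should not discard $Z$ when passing to the $W_i=\OPT$ phase; instead I would lower bound $\E{\f{S_{i-1}\cup\OPT}}$ for $i>\ell$ by a submodular decomposition along $Z$ — of the form $\f{S\cup\OPT}\ge\f{S\cup\OPT\cup Z}-\f{\OPT\cup Z}+\f{\OPT}$ combined with estimates on $\f{S\cap\OPT}$ and the survival bound applied to the guided-phase additions — which is exactly what produces the coefficient $\alpha^k-\alpha^{k-\ell}$ of $\f{\OPT\cap Z}$ and the $\alpha^{k-\floor{t_s k}}$ versus $\alpha^{k-\ell}$ distinction in the $\f{\OPT\cup Z}$ coefficient.

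Finally, I would substitute both ingredients into the recurrence and unroll from $i=1$ to $i=k$, handling the blocks $1,\dots,\ell$ and $\ell+1,\dots,k$ separately. Each block yields a geometric sum in $\alpha$; using $\alpha=\tfrac{k-1}{k}$ to simplify (so that quantities like $k(1-\alpha^k)$ and $\tfrac{2k-\ell-1}{k}$ emerge from $\sum\alpha^j$), these collapse to exactly the stated coefficients of $\f{\OPT}$, $\f{\OPT\cup Z}$, $\f{S_0}$ (which retains its full $\alpha^k$ weight because the recurrence multiplies $g_0=\f{S_0}$ by $\alpha$ each round), and $\f{\OPT\cap Z}$, while the errors accumulate to $\sum_{i=1}^k\alpha^{k-i}\cdot\tfrac{2\eps}{k}\f{\OPT}=2\eps(1-\alpha^k)\f{\OPT}$. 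The main obstacle is the phase-boundary analysis and the associated bookkeeping: getting the exact coefficients — in particular isolating the $\f{\OPT\cap Z}$ contribution and the $\ceil{\cdot}$/$\floor{\cdot}$ asymmetry — requires tracking, round by round, which elements of $\OPT\cup Z$ the running solution competes against and how their inclusion probabilities interact across the two phases; verifying that the non-monotone sub-sampling lemma (with the $\max\{\cdot,0\}$ truncation) costs no more than the claimed $\tfrac{2\eps}{k}\f{\OPT}$ per round is a secondary technical point.
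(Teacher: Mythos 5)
Your high-level plan (a per-round recurrence $g_i\ge\alpha g_{i-1}+\frac1k\E{\f{S_{i-1}\cup W_i}}-\frac{2\eps}{k}\f{\OPT}$, unrolled over two phases, with the errors summing to $2\eps\term{1-\alpha^k}\f{\OPT}$) matches the paper's, and the query count is right. But there are two genuine gaps in the ingredients. First, your comparison set on the guided rounds is wrong: during iterations $i\le\ceil{t_s\cdot k}$ the algorithm samples $M_i$ from $\mathcal{N}\setminus Z$ only, so the sub-sampled greedy guarantee can only be invoked against a size-$k$ subset of $\mathcal{N}\setminus Z$; the paper takes the comparison set to be $\OPT\setminus Z$, not $\OPT\cup Z$. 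Your choice $W_i=\OPT\cup Z$ fails both because $Z$ is excluded from the sampling universe and because $\abs{\OPT\cup Z}$ can be as large as $2k$, which would degrade the $\frac1k$ coefficient to $\frac1{2k}$. This matters for the conclusion: the $\term{\alpha^k-\alpha^{k-\ceil{t_s\cdot k}}}\f{\OPT\cap Z}$ term does not come from any phase-boundary analysis; it comes from applying $\f{\OPT\setminus Z}\ge\f{\OPT}-\f{\OPT\cap Z}$ (submodularity plus non-negativity) to the unrolled first-phase bound at the very end.

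Second, your bound $\E{\f{S_{i-1}\cup A}}\ge\alpha^{i-1}\f{A}$ for the second phase is too lossy, and the \emph{submodular decomposition along $Z$} you propose to repair it is not a valid inequality: submodularity only gives $\f{S\cup\OPT}\ge\f{S\cup\OPT\cup Z}+\f{\OPT\cup\term{S\cap Z}}-\f{\OPT\cup Z}$, and $\f{\OPT\cup\term{S\cap Z}}\ge\f{\OPT}$ need not hold for non-monotone $f$. The paper's Lemma~\ref{lem:S_i_union_A} is the missing ingredient: using the Lov\'asz extension together with the fact that every element added during the guided phase lies outside $Z$ (so the level sets $T_\lambda$ over the relevant range of $\lambda$ have the form $B_\lambda\cup A$ with $B_\lambda\cap Z=\varnothing$, whence $\f{B_\lambda\cup A}\ge\f{A}-\f{A\cup Z}$), it shows $\E{\f{S_i\cup A}}\ge\alpha^{\beta_i}\f{A}-\term{\alpha^{\beta_i}-\alpha^{i-1}}\f{A\cup Z}$ with $\beta_i=\max\br{0,i-\ceil{t_s\cdot k}-1}$. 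That $\f{A\cup Z}$ correction term is precisely what generates the stated coefficient of $\f{\OPT\cup Z}$; with only the crude $\alpha^{i-1}\f{A}$ bound, the unrolling cannot reproduce the claimed constants.
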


\begin{algorithm}[!t]
    \SetKwInOut{Input}{input}
    \SetKwInOut{Output}{output}

    \Input{A set $Z \subseteq \mathcal{N}$, a positive integer $k \geq 1$, values $\eps \in (0, 1)$ and $t_s \in [0,1]$, and a non-negative submodular function $f$}
    \Output{A set $S_k \subseteq \mathcal{N}$}

    Initialize $S_0 \gets \emptyset$. \\
    Let $p \gets 8k^{-1}\eps^{-2}\ln{\term{2\eps^{-1}}}$ \label{alg:lazier_greedy_setting_p}.\\
    Let $s_1 \gets k / \term{n - \abs{Z}}$. \\
    Let $s_2 \gets k / n$. \\
    \For{$i = 1$ to $\ceil{k \cdot t_s}$}{
        Let $M_i \subseteq \mathcal{N} \setminus Z$ be a uniformly random set containing $\ceil{p \cdot \term{n - \abs{Z}}}$ elements.\\
        Let $d_i$ be uniformly random scalar from the range $(0, s_1\ceil{p \term{n - \abs{Z}}}]$.\\
        Let $u_i$ be an element of $M_i$ associated with the $\ceil{d_i}$-the largest marginal contribution to $S_{i-1}$.\\
        \If{$\f{u_i \mid S_{i-1}} \geq 0$}{
            $S_i \gets S_{i-1} \cup \br{u_{i}}$.
        }
        \Else{$S_{i} \gets S_{i-1}$.}
    }
   \For{$i = \ceil{k \cdot t_s} + 1$ to $k$}{
        Let $M_i \subseteq \mathcal{N}$ be a uniformly random set containing $\ceil{p \cdot n}$ elements.\\
        Let $d_i$ be uniformly random scalar from the range $(0, s_2\ceil{ p\cdot n}]$.\\
        Let $u_i$ be an element of $M_i$ associated with the $\ceil{d_i}$-the largest marginal contribution to $S_{i-1}$.\\
        \lIf{$\f{u_i \mid S_{i-1}} \geq 0$}{
            $S_i \gets S_{i-1} \cup \br{u_{i}}$.
        }
        \lElse{$S_{i} \gets S_{i-1}$.}
    }
    
    \caption{Guided Stochastic Greedy}
    \label{alg:main_lazier_than_lazy_greedy}
\end{algorithm}

The algorithm used to prove Theorem~\ref{thm:lazy_greedy_guarantee} is Algorithm~\ref{alg:main_lazier_than_lazy_greedy}. This algorithm starts with an empty set and adds elements to it in iterations (at most one element per iteration) until its final solution is ready after $k$ iterations. In its first $\lceil k \cdot t_s \rceil $ iterations, the algorithm ignores the elements of $Z$, and in the other iterations it considers all elements. However, except for this difference, the behavior of the algorithm in all iterations is very similar. Specifically, in each iteration $i$ the algorithm does the following two steps. In Step~(i), the algorithm samples a subset $M_i$ containing $O_\eps(n/k)$ elements from the data. In Step~(ii), the algorithm considers a subset of $M_i$ (of size either $s_1\lceil p(n - |Z|) \rceil$ or $s_2\lceil pn) \rceil$) containing the elements of $M_i$ with the largest marginal contributions with respect to the current solution $S_{i - 1}$, and adds a uniformly random element out of this subset to the solution (if this element has a positive marginal contribution).

\begin{algorithm}[!t]
    \SetKwInOut{Input}{input}
    \SetKwInOut{Output}{output}
    \DontPrintSemicolon
    \Input{A positive integer $k \geq 1$, a submodular function $f$, error rate $\eps \in (0,1)$, and a flip point $0\leq t_s\leq 1$}
    \Output{A set $S_L \subseteq \mathcal{N}$}
    
    $Z\gets \flocalsearch\term{k,f, \eps, L:= \lceil 16k / (\eps\term{1 - \nicefrac{1}{e}}} \rceil$. \label{algLine:main_run_fast_local_search}\\
    \If{the last algorithm did not fail}{
    $A\gets \Stochasticgreedy\term{Z,k,t_s,\eps}$ \label{algLine:main_run_stochastic_greedy}.\\
    
    \Return $\max\{f(Z),f(A)\}$.}
    \lElse{\Return{$\emptyset$}.}
    \caption{A $0.385$-approximation algorithm for submodular maximiziation}
    \label{alg:385_fast}
\end{algorithm}

\subsection{$0.385$-Approximation guarantee} \label{ssc:main}

In this section, our objective is to prove the following theorem.

\begin{theorem}[Approximation guarantee] \label{thm:main}
Given an integer $k \geq 1$ and a non-negative submodular function $f\colon 2^{\mathcal{N}} \to \mathbb{R}$, there exists an $0.385$-approximation algorithm for the problem of finding a set $S \subseteq \mathcal{N}$ of size at most $k$ maximizing $f$. This algorithm uses $O_\eps\term{n + k^2}$ queries to the objective function.
\end{theorem}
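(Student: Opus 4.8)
The plan is to combine Theorem~\ref{thm:fast_local_search} and Theorem~\ref{thm:lazy_greedy_guarantee} and then optimize over the free parameter $t_s$. Concretely, I would run Algorithm~\ref{alg:385_fast}: first invoke \flocalsearch{} to obtain (with probability at least $1-\eps$) a set $Z$ of size at most $k$ satisfying
\[
\f{Z} \geq \frac{\f{Z \cap \OPT} + \f{Z \cup \OPT}}{2 + \eps} \quad\text{and}\quad \f{Z} \geq \frac{\f{Z \cap \OPT}}{1 + \eps}\enspace,
\]
and then feed $Z$ into Algorithm~\ref{alg:main_lazier_than_lazy_greedy} to obtain a set $A$ with the expectation bound of Theorem~\ref{thm:lazy_greedy_guarantee}. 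The output is $\max\{\f{Z}, \f{A}\}$. The query complexity is immediate: $O_\eps(n+k^2)$ for the local search plus $O_\eps(n)$ for the guided greedy, so the bottleneck is the $k^2$ term of \flocalsearch{}.

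The analytic core is a convexity/LP-duality style argument. Write $\alpha = 1 - 1/k$; note $\alpha^k \to 1/e$ and more precisely $\alpha^k \leq 1/e$ while $\alpha^{k-1} \geq 1/e$, and $\alpha^{k-\ceil{t_s k}} \to e^{-(1-t_s)}$. I would substitute these limiting values (incurring only $o(1)$ error that gets absorbed into the $\eps$ terms, or handled by taking $k$ large enough — for small $k$ the problem is solved essentially exactly by brute force within the claimed complexity) to turn the bound of Theorem~\ref{thm:lazy_greedy_guarantee} into a clean expression of the form
\[
\E{\f{A}} \geq a(t_s)\f{\OPT} + b(t_s)\f{\OPT \cup Z} + c(t_s)\f{\OPT \cap Z} + d \cdot \f{S_0} - O(\eps)\f{\OPT}\enspace,
\]
with $a,b,c$ explicit functions of $t_s$ and $d = 1/e \geq 0$. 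Then I would consider a convex combination: for suitable nonnegative multipliers $\lambda_1, \lambda_2, \lambda_3$ summing appropriately, bound
\[
\lambda_1 \E{\f{A}} + \lambda_2 \f{Z} + \lambda_3 \f{Z} \geq \max\{\E{\f{A}}, \f{Z}\}\cdot(\lambda_1+\lambda_2+\lambda_3)^{-1}\cdot(\lambda_1+\lambda_2+\lambda_3),
\]
plugging the two guarantees for $\f{Z}$ (one using the $(2+\eps)$-bound, one using the $(1+\eps)$-bound) so that the coefficients of $\f{\OPT \cup Z}$, $\f{\OPT \cap Z}$, and $\f{S_0}$ in the combined inequality all become nonnegative (and hence can be dropped, since $f \geq 0$), leaving a bound of the form $(\text{const})\f{\OPT}$ on $\max\{\E{\f{A}},\f{Z}\} \geq \E{\max\{\f{A},\f{Z}\}}$. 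Choosing $t_s$ and the $\lambda_i$'s to maximize the resulting constant should yield $0.385$ (matching the ratio of~\cite{buchbinder2019constrained}, whose analysis this mirrors).

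The main obstacle I anticipate is not any single deep step but rather the bookkeeping of signs: one must verify that there is a choice of $t_s \in [0,1]$ and nonnegative combination weights for which \emph{simultaneously} the $\f{\OPT\cup Z}$, $\f{\OPT\cap Z}$, and $\f{S_0}$ coefficients are all $\geq 0$ after combining, while the $\f{\OPT}$ coefficient reaches $0.385$. This is the step where the specific constants in Theorem~\ref{thm:lazy_greedy_guarantee} — in particular the asymmetry between $\ceil{t_s k}$ and $\floor{t_s k}$ — matter, and where I would expect to solve a small optimization problem (likely the optimum is near $t_s$ such that $e^{-(1-t_s)}$ takes a particular value, around $t_s \approx 0.372$ as in~\cite{buchbinder2019constrained}). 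A secondary technical point is converting $\E{\f{A}} \geq \text{const}\cdot\f{\OPT}$ together with $\f{Z} \geq \text{const}\cdot(\cdots)$ into a guarantee on $\E{\max\{\f{Z},\f{A}\}}$ and accounting for the failure probability $\eps$ of \flocalsearch{}; this just costs an additive $O(\eps)\f{\OPT}$ loss, so taking $\eps$ small (e.g. replacing the target $0.385$ by $0.385$ after starting from a slightly better constant, or noting $0.385$ already has slack below the true optimum of the combination) absorbs it, and I would finally rescale $\eps$ to keep the query complexity $O_\eps(n+k^2)$.
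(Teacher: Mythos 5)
Your overall strategy is the paper's: run {\flocalsearch} to get $Z$, run Algorithm~\ref{alg:main_lazier_than_lazy_greedy} to get $A$, lower bound $\E{\max\{\f{A},\f{Z}\}}$ by a convex combination $p_1,p_2,p_3$ of the two guarantees on $Z$ and the expectation bound on $A$, replace the $\alpha$-powers by their $e^{t_s-1}$-type limits at an $O(k^{-1})$ cost, and defer the final choice of $t_s$ and the weights to the numerical optimization of~\cite{buchbinder2019constrained}. Two points need fixing. First, a slip in direction: a convex combination is a \emph{lower} bound on the maximum, i.e.\ $\E{\max\{\f{A},\f{Z}\}} \geq p_3\E{\f{A}} + (p_1+p_2)\E{\f{Z}}$; your displayed inequality asserts the combination dominates the max (and also $\max\{\E{\f{A}},\f{Z}\} \geq \E{\max\{\f{A},\f{Z}\}}$, which is the reverse of Jensen). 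The argument you intend is the correct one, but as written the inequality goes the wrong way.

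Second, and more substantively: the asymptotic substitution leaves a guarantee of the form $c - O(\eps) - O(k^{-1})$ with $c > 0.385$, so small constant $k$ genuinely needs a separate treatment, and your proposed fallback --- brute force --- does not fit the claimed query budget. Enumerating all subsets of size at most $k$ costs $\Theta(n^k)$ queries, which already for $k=2$ exceeds $O_\eps(n+k^2)$. The paper instead handles small $k$ by replicating each element $\rho$ times, defining $g(S) = \E{f(R(S))}$ on the blown-up ground set, applying the main lemma with cardinality $\rho k$ (now large enough that the $O((\rho k)^{-1})$ loss is negligible), and rounding the resulting set back via Pipage Rounding; since the rounded set has constant size, this last step costs $O(1)$ queries. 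Without some such reduction, your proof establishes the theorem only for $k$ above a fixed constant.
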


The algorithm used to prove Theorem~\ref{thm:main} is Algorithm~\ref{alg:385_fast}. Specifically, Lemma~\ref{lem:almost_main} gives our guarantee for Algorithm~\ref{alg:385_fast}. Notice that this lemma immediately implies Theorem~\ref{thm:main} when $k$ is large enough. If $k$ is too small, we can still get Theorem~\ref{thm:main} from Lemma~\ref{lem:almost_main} using a three steps process. First, we choose an integer $\rho$ such that $\rho k$ is large enough, and we create a new ground set $\mathcal{N}_\rho = \{u_i \mid u \in \mathcal{N}, i \in [\rho]\}$ and a new objective function $g\colon 2^{\mathcal{N}_\rho} \to \mathbb{R}$ defined as 
$
    g(S)
    =
    \E{f(R(S))}
    ,
$%
where $R(S)$ is a random subset of $\mathcal{N}$ that includes every element $u \in \mathcal{N}$ with probability $|S \cap (\{u\} \times [\rho])| / \rho$. Then, we use Lemma~\ref{lem:almost_main} to get a set $\hat{S}$ that provides $0.385$-approximation for the problem $\max\{g(S) \mid |S| \leq \rho k\}$. Finally, the Pipage Rounding technique of~\cite{calinescu2011calinescu} can be used to get from $\hat{S}$ a $0.385$-approximation for our original problem. Notice that since the size of $\hat{S}$ is constant (as we consider the case of a small $k$), this rounding can be done using a constant number of queries to the objective.

The proof of Theorem ~\ref{thm:main} is based on Lemma~\ref{lem:almost_main}, which we present and prove next.

\begin{lemma}
\label{lem:almost_main}
Algorithm~\ref{alg:385_fast} makes $O_\eps(n + k^2)$ queries to the objective function, and returns a set whose expected value is $c - O(\eps) - O(k^{-1})$ for a constant value $c > 0.385$.
\end{lemma}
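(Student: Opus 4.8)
The plan is to combine the two subroutine guarantees — Theorem~\ref{thm:fast_local_search} for the set $Z$ and Theorem~\ref{thm:lazy_greedy_guarantee} for the set $A$ — and then show that $\max\{f(Z), f(A)\}$ is at least $c \cdot f(\OPT)$ for some constant $c > 0.385$, after accounting for the $O(\eps)$ and $O(k^{-1})$ slack terms. First I would dispose of the query-complexity claim: Line~\ref{algLine:main_run_fast_local_search} invokes {\flocalsearch} with $L = O_\eps(k)$, which by Theorem~\ref{thm:fast_local_search} costs $O_\eps(n + k^2)$ queries; Line~\ref{algLine:main_run_stochastic_greedy} invokes the Guided Stochastic Greedy algorithm, which costs $O_\eps(n)$ queries by Theorem~\ref{thm:lazy_greedy_guarantee}; and the final comparison is $O(1)$ queries. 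Hence the total is $O_\eps(n + k^2)$, as required.

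For the approximation guarantee I would condition on the (probability $\geq 1 - \eps$) event that {\flocalsearch} succeeds and returns a set $Z$ satisfying the two inequalities of Theorem~\ref{thm:fast_local_search}; on the complementary event we lose at most an additive $\eps \cdot f(\OPT)$, which is absorbed into the $O(\eps)$ term (one should also note $f(\OPT) \geq 0$ so $\emptyset$ is a valid fallback). Write $a = f(\OPT)$, $b = f(\OPT \cup Z)$, $c_1 = f(\OPT \cap Z)$, and use the local-search inequalities $f(Z) \geq (f(\OPT \cap Z) + f(\OPT \cup Z))/(2 + \eps)$ and $f(Z) \geq f(\OPT \cap Z)/(1 + \eps)$, together with nonnegativity and submodularity (which gives, e.g., $f(\OPT \cap Z) + f(\OPT \cup Z) \geq f(\OPT)$, and $f(S_0) \geq 0$). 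Now set $\alpha = 1 - 1/k$ and take $k \to \infty$, so that $\alpha^k \to e^{-1}$, $\alpha^{k - \lceil t_s k\rceil} \to e^{-(1 - t_s)}$, and the prefactor $(k - \lceil t_s k\rceil)/k \to 1 - t_s$; the $O(k^{-1})$ term accounts for the finite-$k$ corrections to these limits. Substituting into the bound of Theorem~\ref{thm:lazy_greedy_guarantee} yields, in the limit,
\begin{align*}
\E{f(A)} &\geq \bigl((1 - t_s)e^{-(1 - t_s)} + e^{-(1-t_s)} - e^{-1}\bigr) a + \bigl(2e^{-1} - (2 - t_s)e^{-(1-t_s)}\bigr) b \\
&\quad + e^{-1} f(S_0) + \bigl(e^{-1} - e^{-(1-t_s)}\bigr) c_1 - 2\eps(1 - e^{-1}) a.
\end{align*}

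The core of the argument is then a purely numerical optimization: choose the flip point $t_s$ so that, taking the better of the two candidate bounds $f(Z) \geq \max\{(c_1 + b)/2,\, c_1\}$ (ignoring the harmless $\eps$ in the denominators) and the displayed bound on $\E{f(A)}$, the worst case over all feasible triples $(a, b, c_1)$ with $a, b, c_1 \geq 0$, $b + c_1 \geq a$, $c_1 \leq a$, $b \leq$ (appropriate bound) of $\max\{f(Z), \E{f(A)}\}/a$ exceeds $0.385$. Concretely one takes a convex combination $\theta \E{f(A)} + (1-\theta) f(Z)$, drops the nonnegative $f(S_0)$ and $f(\OPT \cup Z)$ contributions where they help, and picks $t_s$ (numerically, $t_s$ should come out close to the value used in~\cite{buchbinder2019constrained}, roughly $t_s \approx 0.372$) so that the coefficient of $b$ and $c_1$ in the combination can be made nonnegative while the coefficient of $a$ stays above $0.385$. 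This is exactly the LP/numeric step that underlies the $0.385$-approximation of~\cite{buchbinder2019constrained}, adapted to our decomposition. The main obstacle I anticipate is precisely this step: verifying that the specific piecewise-defined lower bound — the pointwise maximum of the local-search bound and the stochastic-greedy bound — is everywhere at least $0.385 \cdot a$ over the feasible region, and doing so with a clean enough choice of $t_s$ and combination weights that the constant is provably strictly above $0.385$ (so that the $-O(\eps) - O(k^{-1})$ losses still leave a $0.385$-approximation for all sufficiently large $k$ and sufficiently small $\eps$). Everything else — the conditioning on success, the limit computations, the query count — is routine.
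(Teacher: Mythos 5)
Your proposal is correct and follows essentially the same route as the paper's proof: condition on the success event of {\flocalsearch}, take a convex combination of the two local-search inequalities and the Guided Stochastic Greedy bound, replace the $\alpha^k$-type coefficients by their $e^{t_s-1}$ limits up to $O(k^{-1})$ corrections, and defer the final numerical choice of the flip point and combination weights to the LP analysis of~\cite{buchbinder2019constrained}. The paper likewise does not carry out that numerical step itself, so your proposal matches its level of detail; the query-complexity accounting is identical.
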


\begin{proof}
According to the proof of Theorem~\ref{thm:fast_local_search}, our choice of the parameter $L$ in Algorithm~\ref{alg:faster_local_search} guarantees that with probability at least $1 - 2\eps$ the set $Z$ obeys the inequalities
\[
    f(Z) \geq \frac{f(Z \cup \OPT) + f(Z \cap OPT)}{2 + \eps}
    \qquad
    \text{and}
    \qquad
    f(Z) \geq \frac{f(Z \cap \OPT)}{1 + \eps}
    \enspace.
\]
Let us denote by $\mathcal{E}$ the event that these inequalities hold. By Theorem~\ref{thm:lazy_greedy_guarantee},
\begin{align*}
\E{\f{A} \mid \mathcal{E}} &\geq \term{\frac{k - \ceil{t_s \cdot k}}{k} \alpha^{k - \ceil{t_s \cdot k} - 1} + \alpha^{k - \ceil{t_s \cdot k}} - \alpha^{k}}\f{\OPT} + \\
&\quad + \term{\alpha^k + \alpha^{k-1} - \frac{2k - \ceil{t_s \cdot k} - 1}{k} \alpha^{k - \floor{t_s \cdot k}}}\E{\f{\OPT \cup Z} \mid \mathcal{E}} \\
&\quad + \alpha^k \f{S_0} + \term{\alpha^k - \alpha^{k - \ceil{t_s \cdot k}} } \E{\f{\OPT \cap Z}\mid \mathcal{E}} \\
&\quad - 2\eps\term{1 - \alpha^k}\f{\OPT}\enspace.    
\end{align*}

Since the output of Algorithm~\ref{alg:main_lazier_than_lazy_greedy} is the better set among $A$ and $Z$, we can lower bound its value by any convex combination of lower bounds on the values of $A$ and $Z$. More formally, if we denote by $p_1$, $p_2$ and $p_3$ any three non-negative values that add up to $1$, then we get
\begin{equation} \label{eq:max_inequality}
\begin{split}
\E{\max\{\f{A}, \f{Z}\}\mid \mathcal{E}} \mspace{-150mu}&\mspace{150mu}\geq p_3\term{\frac{k - \ceil{t_s \cdot k}}{k} \alpha^{k - \ceil{t_s \cdot k} - 1} + \alpha^{k - \ceil{t_s \cdot k}} - \alpha^{k}}\f{\OPT} \\
&\quad + \term{\frac{p_1}{2 + \eps} + p_3\term{\alpha^k + \alpha^{k-1} - \frac{2k - \ceil{t_s \cdot k} - 1}{k} \alpha^{k - \floor{t_s \cdot k}}}} \E{\f{\OPT \cup Z} \mid \mathcal{E}} \\
&\quad + \term{\frac{p_2}{1 + \eps} + \frac{p_1}{2 + \eps} - p_3 \term{\alpha^{k - \ceil{t_s \cdot k}} - \alpha^k}} \E{\f{\OPT \cap Z}\mid \mathcal{E}} \\
&\quad - 2\eps p_3 \term{1 - \alpha^k}\f{\OPT}\enspace.
\end{split}
\end{equation}

To simplify the above inequality, we need to lower bound some of the terms in it. First,
\begin{align*}
\frac{k - \ceil{t_s \cdot k}}{k} &\alpha^{k - \ceil{t_s \cdot k} - 1} + \alpha^{k - \ceil{t_s \cdot k}} - \alpha^{k} \geq \term{2 - t_s - \frac{1}{k}} \alpha^{k\term{1 - t_s}} - \alpha^k\\
&\geq  \term{2 - t_s - \frac{1}{k}} e^{t_s - 1}\term{1 - \frac{1}{k}}^{1 - t_s} - e^{-1} \\
&\geq \term{2 - t_s - \frac{3}{k}}e^{t_s - 1} - e^{-1} \\
&\geq \term{2 - t_s - e^{-t_s}}e^{t_s - 1} - \frac{3}{k}\enspace,    
\end{align*}
where the first inequality holds since $\ceil{t_s \cdot k} \leq t_s \cdot k + 1$, $\alpha \leq 1$, the second inequality follows since $\alpha^{k\term{1 - t_s}} \geq e^{t_s - 1}\term{1 - \frac{1}{k}}^{1 - t_s}$, and the last inequality holds since $e^{t_s - 1} \leq 1$. Second, 
\begin{align*}
\alpha^k + \alpha^{k-1} - \frac{2k - \ceil{t_s \cdot k} - 1}{k} \alpha^{k - \floor{t_s \cdot k}} &\geq 2e^{-1} -  \frac{2k - \ceil{t_s \cdot k} - 1}{k} \alpha^{k - {t_s \cdot k}} - \frac{2e^{-1}}{k}\\
&\geq 2e^{-1} -\term{2 - t_s + \frac{1}{k}} e^{t_s - 1} - \frac{2e^{-1}}{k}\\
&\geq -e^{t_s - 1} \term{2 - t_s - 2e^{-t_s}} - \frac{1 + 2e^{-1}}{k}\enspace,    
\end{align*}
where the first inequality holds since $\alpha^{k-1} \geq \alpha^k \geq e^{-1}\term{1 - \frac{1}{k}}$, the second inequality holds since $\alpha^{k - t_s \cdot k} \geq e^{t_s - 1}$ and the last inequality holds since $t_s \in [0,1]$. Finally, it holds that $\alpha^k - \alpha^{k - \ceil{t_s \cdot k}} \geq e^{-1}\term{1 - \frac{1}{k}} - e^{t_s - 1}\term{1 - \frac{1}{k-1}} \geq -e^{t_s - 1} \term{1 - e^{-t_s}} - \frac{e^{-1}}{k}$.

Plugging all the above lower bounds into Inequality~\eqref{eq:max_inequality}yields the promised simplified guarantee that
\begin{equation*}
\begin{split}
\E{\max\{\f{A}, \f{Z}\}\mid \mathcal{E}} &\geq p_3\term{2 - t_s - e^{-t_s}}e^{t_s - 1} \f{\OPT}  - \bigO{k^{-1}}\f{\OPT}\\
&\quad + \term{\frac{p_1}{2 + \eps} -p_3 e^{t_s - 1} \term{2 - t_s - 2e^{-t_s}}} \E{\f{\OPT \cup Z} \mid \mathcal{E}}\\
&\quad \term{\frac{p_2}{1 + \eps} + \frac{p_1}{2 + \eps } -p_3e^{t_s - 1} \term{1 - e^{-t_s}}} \E{\f{\OPT \cap Z} \mid \mathcal{E}}\enspace.
\end{split}
\end{equation*}

In~\cite{buchbinder2019constrained}, It was shown that for an appropriate choice of values for $p_1$, $p_2$, $p_3$ and $t_s$ the last inequality implies $
    \E{\max\{\f{A}, \f{Z}\}\mid \mathcal{E}} \geq (c - O(k^{-1})) \cdot f(\OPT)$
for some constant $c > 0.385$. We need to observe that the event $\mathcal{E}$ happens with probability at least $1 - \eps$, and when it does not happen the set returned by the algorithm still has a non-negative value. Thus, removing the conditioning on $\mathcal{E}$ in the last inequality only affects the constant inside the big $O$ notation.

To complete the proof of the lemma, note that Line~\ref{algLine:main_run_fast_local_search} of Algorithm~\ref{alg:385_fast} requires $O_\eps\term{n + k^2}$ queries to the objective function as shown in the proof of Theorem~\ref{thm:fast_local_search}, while Line~\ref{algLine:main_run_stochastic_greedy} of Algorithm~\ref{alg:385_fast} requires $O_\eps\term{n}$ queries to the objective function as dictated by Theorem~\ref{thm:lazy_greedy_guarantee}. In total, Algorithm~\ref{alg:385_fast} requires $O_\eps\term{n + k^2}$.     
\end{proof}

\section{Experiments}
\label{sec:experiments}
In this section, we discuss three machine-learning applications: movie recommendation, image summarization, and revenue maximization. Each one of these applications necessitates maximizing a non-monotone submodular function. To emphasize the effectiveness of our suggested method from section~\ref{sec:main_result}, we empirically compare Algorithm~\ref{alg:385_fast} with the Random Greedy algorithm of Buchbinder et al.~\cite{buchbinder2014submodular}, and the Random Sampling algorithm of~\cite{buchbinder2017comparing}. These algorithms are the current state-of-the-art practical algorithms for maximizing non-monotone submodular functions.

Note that, theoretically Algorithm~\ref{alg:main_lazier_than_lazy_greedy} requires $\frac{n\cdot\ln{\eps^{-1}}}{\eps^2}$ queries to the objective function. To that end, practically, we replace Line~\ref{alg:lazier_greedy_setting_p} of Algorithm~\ref{alg:main_lazier_than_lazy_greedy} by $p \gets \frac{8}{k\cdot\eps}$. Throughout the experiments, we have chosen to use $\eps=0.1$. For each algorithm, all the reported results are averaged across $8$ executions.

\textbf{Software/Hardware}. Our algorithms were implemented in Python 3.11~\cite{python3} using mainly
“Numpy”\cite{2020NumPy-Array}, and Numba~\cite{lam2015numba}. Tests were performed on a $2.2$GHz i9-13980HX (32 cores total) machine with
$64$GB RAM.

\begin{figure}[!b]
    \centering
    \begin{subfigure}[t]{0.49\textwidth}
        \centering
        \includegraphics[width=1\textwidth]{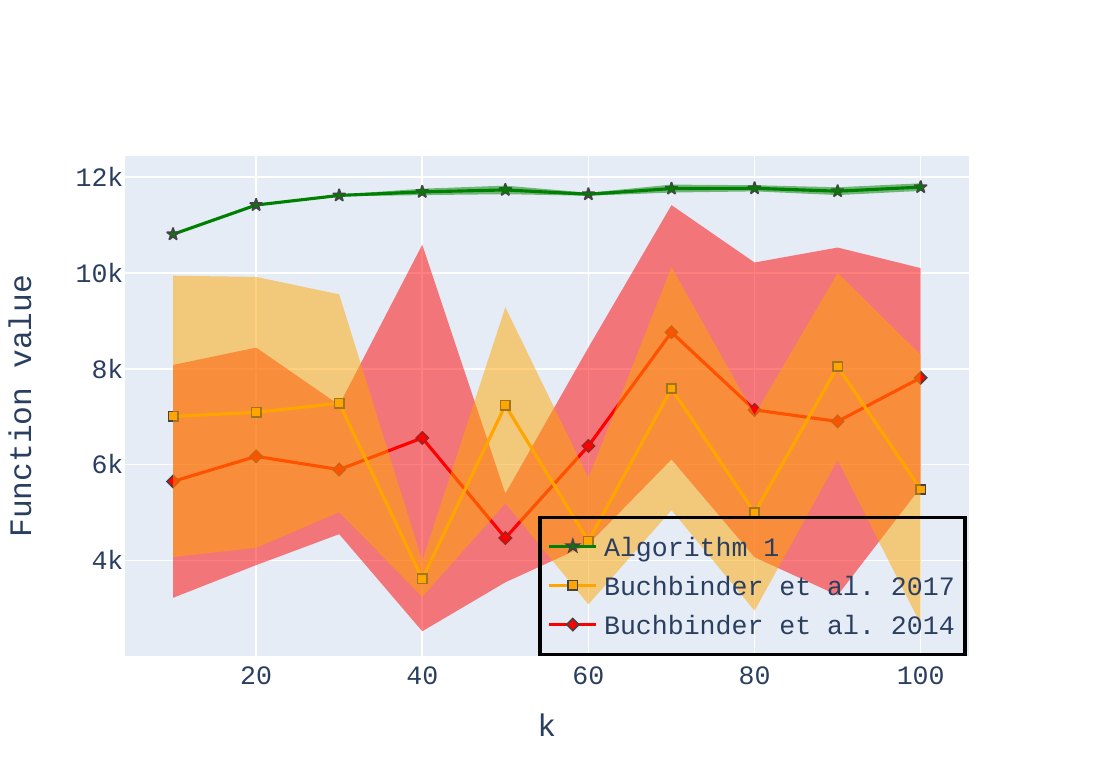}
        \caption{$\lambda := 0.75$.}
    \end{subfigure}%
    \begin{subfigure}[t]{0.49\textwidth}
        \centering
        \includegraphics[width=1\textwidth]{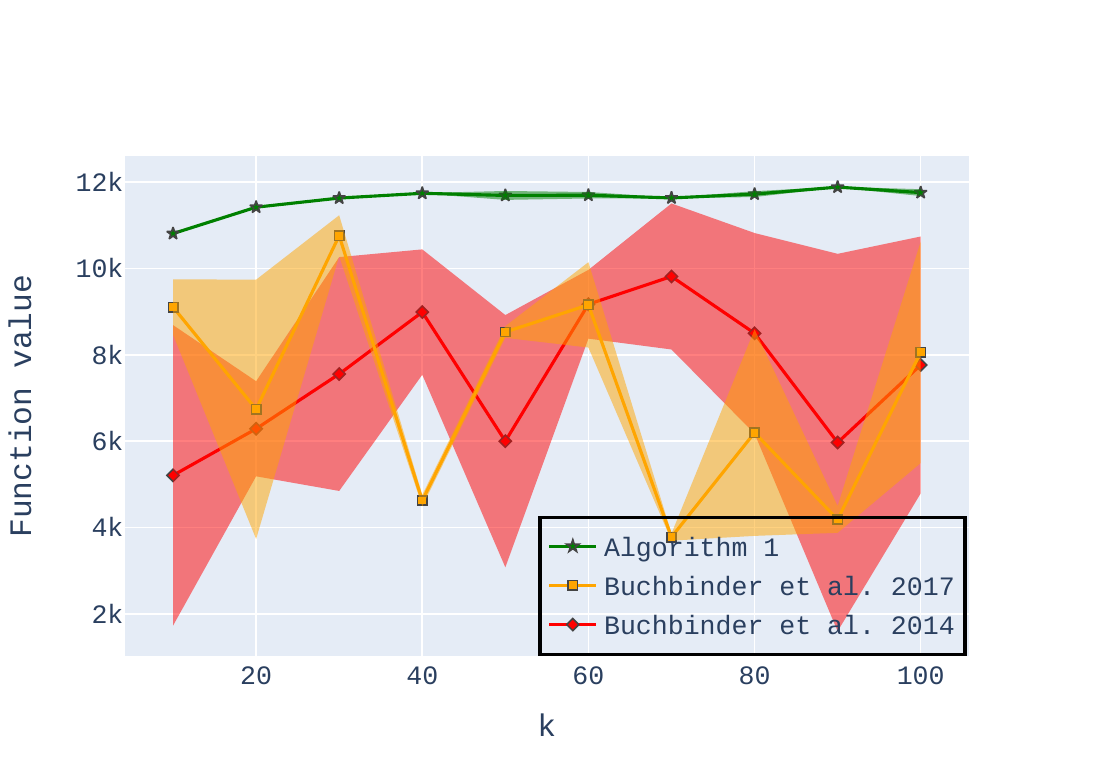}
        \caption{$\lambda := 0.55$}
    \end{subfigure}%
    \caption{Experimental results for Personalized Movie Recommendation. Each plot includes the output of our algorithm in comparison to previous state-of-the-art practical algorithms as described in the beginning of this section for various amounts of movies $k$.}
    \label{fig:movie_recommendation_exp}
\end{figure}

\subsection{Personalized movie recommendation} \label{ssc:movie_recomendation}

The first application we consider is personalized movie recommendation. Consider a movie recommendation system where each user specifies what genres she is interested in, and the system has to provide a representative subset of movies from these genres. Assume that each movie is represented by a vector consisting of users' ratings for the corresponding movie. One challenge here is that each user does not necessarily rate all the movies, hence, the vectors representing the movies do not necessarily have similar sizes. To overcome this challenge, low-rank matrix completion techniques~\cite{candes2009exact} can be performed on the matrix with missing values to obtain a complete rating matrix. Formally, given a few ratings from $k$ users to $n$ movies we obtain in this way a rating matrix $\mathbf{M}$ of size $k \times n$. Following~\cite{mualem2022using,mirzasoleiman2016fast}, to score the quality of a selected subset of movies, we use $f(S)=\sum_{u \in \mathcal{N}}\sum_{v\in S} s_{u,v}-\lambda\sum_{u\in S}\sum_{v\in S}s_{u,v}\enspace.$
Here, $\mathcal{N}$ is the set of $n$ movies, $\lambda \in [0, 1]$ is a parameter and $s_{u,v}$ denotes the similarity between movies $u$ and $v$ (the similarity $s_{u, v}$ can be calculated based on the matrix $\mathbf{M}$ in multiple ways: cosine similarity, inner product, etc). Note that the first term in the definition of $f$ captures the coverage, while the second term captures diversity. Thus, the parameter $\lambda$ denotes the importance of diversity in the returned subset which makes the function non-monotone. Note that for any $\lambda \leq 0.5$, $\f{S}$ is monotone~\cite{mualem2023resolving}. 

We followed the experimental setup of the prior works~\cite{mualem2022using,mirzasoleiman2016fast} and used a subset of movies from the MovieLens data set~\cite{harper2015movielens} which includes $10,437$ movies. Each movie in this data set is represented by a $25$ dimensional feature vector calculated using user ratings, and we used the inner product similarity to obtain the similarity values $s_{u,v}$ based on these vectors.

In this application, we fixed $\lambda$ to be $0.75,0.55$, and varied $k$. The results of this experiments are depicted by Figure~\ref{fig:movie_recommendation_exp}. Our proposed method, Algorithm~\ref{alg:385_fast}, demonstrates superior performance compared to the other methods, and demonstrate the stability of our algorithm.

\subsection{Personalized image summarization}
Consider a setting in which we get as input a collection $\mathcal{N}$ of images from $\ell$ disjoint categories (e.g., birds, dogs, cats) and the user specifies $r \in [\ell]$ categories, and then demands a subset of the images in these categories that summarize all the images of the categories. Following~\cite{mirzasoleiman2016fast} again, to evaluate a given subset of images, we use $
f(S)=\sum_{u\in \mathcal{N}}\max_{v\in S}s_{u,v} - \frac{1}{|\mathcal{N}|}\sum_{u\in S}\sum_{v\in S}s_{u,v} \enspace,$
where $s_{u,v}$ is a non-negative similarity between images $u$ and $v$.

To obtain the similarity between pair of images $u,v$, we utilized the \emph{DINO-VITB16} model~\cite{caron2021emerging} from HuggingFace [69] as the feature encoder for vision datasets. The final layer CLS token embedding output served as the feature representation. The similarity between pairs of images is them computed using the corresponding embedding vectors. Under this experiment, the following datasets were used: \begin{enumerate*}[label=(\roman*)]
    \item \emph{CIFAR10}~\cite{krizhevsky2009learning} -- A dataset of $50000$ images belonging to $10$ different classes (categories).
    \item \emph{CIFAR100}~\cite{krizhevsky2009learning} -- A dataset of $50000$ images belonging to $100$ different classes (categories).
\end{enumerate*}

For each of the datasets above, from a practical standpoint, a set of $10000$ uniformly sampled images was taken where the similarity measure between pairs of images inside the sampled set was computed using the corresponding $10,000$ embedding vectors.

The results of our experiments concerning the image summarization problem are present in Figure~\ref{fig:imageSummarization}. As depicted in the figure, our approach enjoys two distinct features -- the function values Algorithm~\ref{alg:385_fast} results in are higher than those of our competitors, and the standard deviation associated with our runs is lower than those associated with our competitors. Once again, this experiment demonstrated the stability of our algorithm in being consistent with the quality of its generated sets regardless of the randomness of Algorithm~\ref{alg:main_lazier_than_lazy_greedy}.

\begin{figure}
    \centering
    \begin{subfigure}[t]{0.49\textwidth}
        \centering
        \includegraphics[width=1\textwidth]{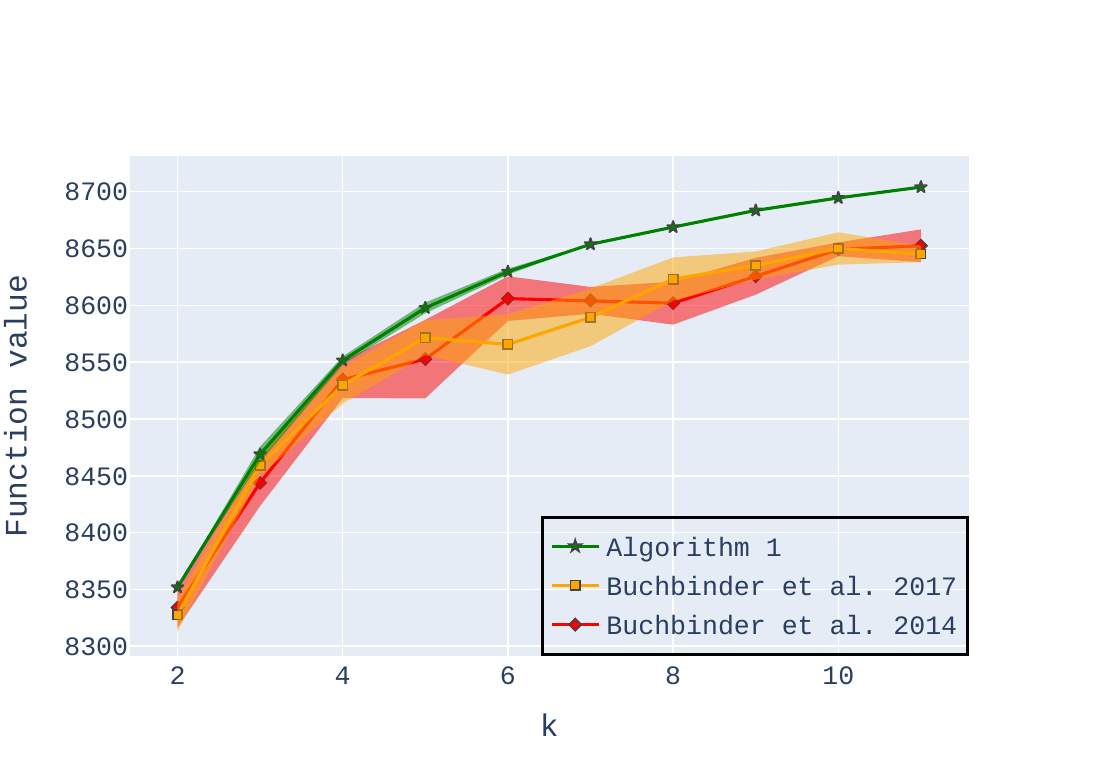}
        \caption{CIFAR10 dataset.}
    \end{subfigure}%
    \centering
    \begin{subfigure}[t]{.49\textwidth}
        \centering
         \includegraphics[width=1\textwidth]{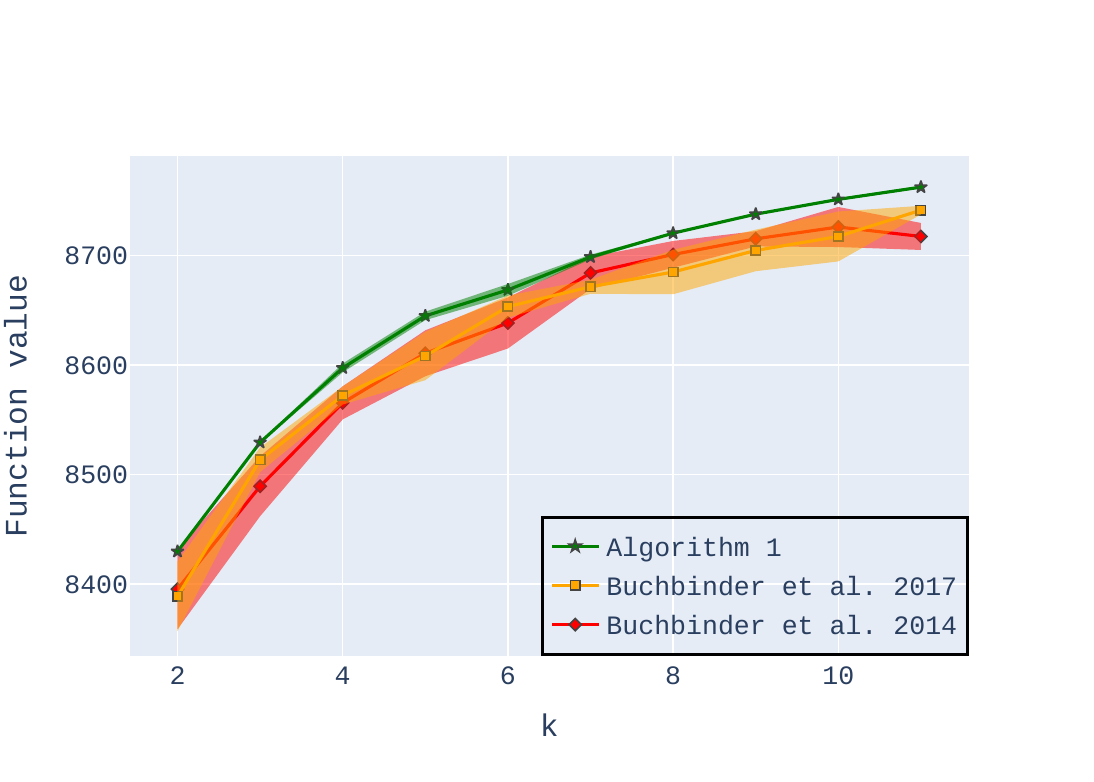}
        \caption{CIFAR100 dataset.}
    \end{subfigure}%
    \centering
    \caption{Results for a varying number of images $k$ concerning the personalized image summarization problem involving CIFAR10 and CIFAR100 datasets.}
    \label{fig:imageSummarization}
\end{figure}
\begin{figure}[t!]
    \centering
    \centering
    \begin{subfigure}[t]{0.5\textwidth}
        \centering
        \includegraphics[width=1\textwidth]{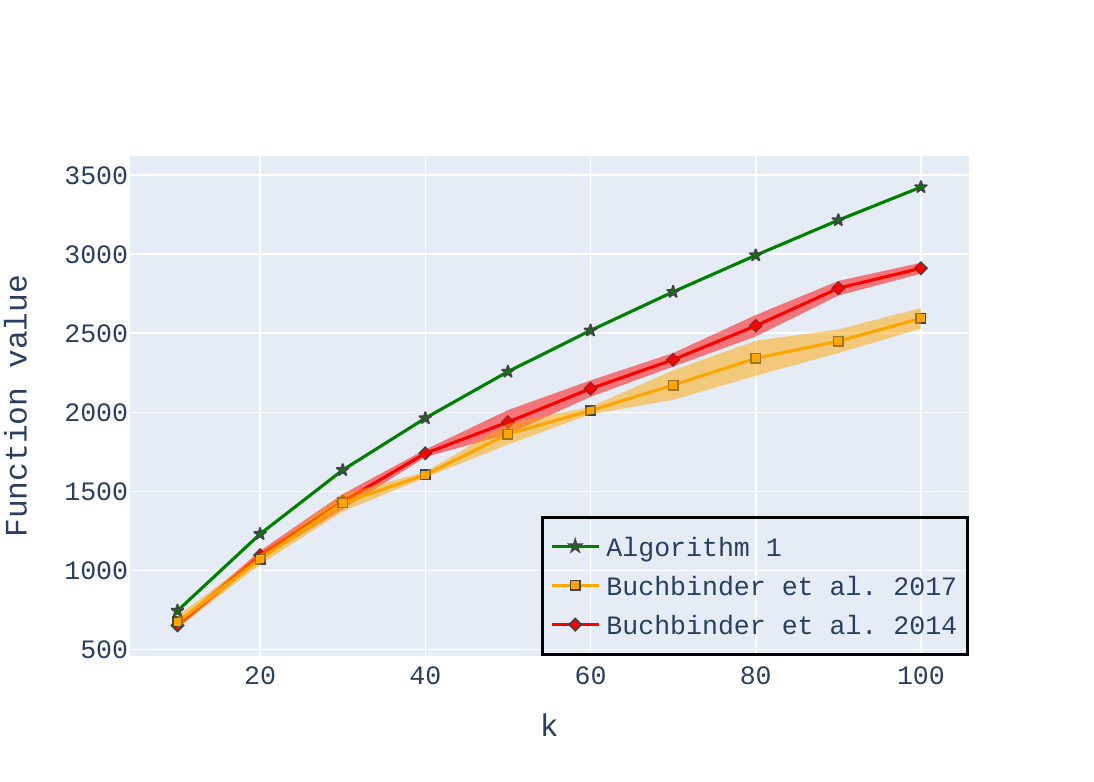}
        \caption{Advogato network dataset.}
        \label{subfigure:Advogato}
    \end{subfigure}%
    \centering
    \begin{subfigure}[t]{0.5\textwidth}
        \centering
        \includegraphics[width=1\textwidth]{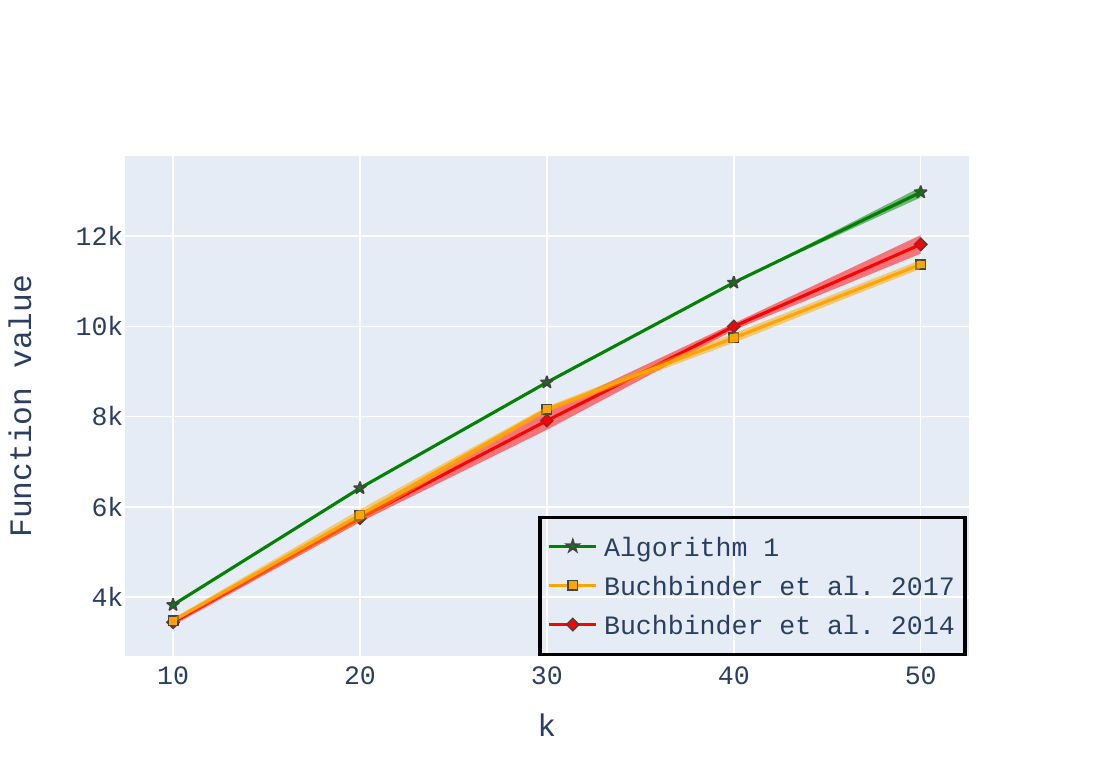}
        \caption{Facebook network dataset.}
        \label{subfigure:Facebook}
    \end{subfigure}%
    \caption{Results for a varying number of users concerning the revenue maximization problem on the Advogato and Facebook network datasets.}
    \label{fig:enter-label}
\end{figure}
\subsection{Revenue maximization}

Consider the following scenario. The objective of some company is to promote a product to users to boost revenue through the \say{word-of-mouth} effect. Specifically speaking, given a social network, we want to find a subset of $k$ users to receive a product for free in exchange for advertising it to their network neighbors, and the goal is to choose users in a manner that maximizes revenue. The problem of optimizing this objective can be formalized as follows. The input is a weighted undirected graph $G = (V, E)$ representing a social network, where $w_{ij}$ represents the weight of the edge between vertex $i$ and vertex $j$ (with $w_{ij} = 0$ if the edge $(i, j)$ is absent from the graph).

Given the set $S\subseteq V$ of users who have become advocates for the product, the expected revenue generated is proportional to the total influence of the users in $S$ on non-advocate users, formally expressed as $f(x)=\sum_{i\in S}\sum_{j\in{V\setminus S}}w_{ij}\enspace.$ It has been demonstrated that $f$ is a non-monotone submodular function~\cite{balkanski2018non}.

Figure~\ref{subfigure:Facebook} presents the function values of Algorithm~\ref{alg:385_fast}, Random Greedy~\cite{buchbinder2014submodular}, and Sample Greedy~\cite{buchbinder2017comparing}  on the Facebook network~\cite{viswanath2009evolution}. Figure~\ref{subfigure:Advogato} does the same, but using the Advogato network~\cite{massa2009dowling}.  Note that the shaded region depicts the standard deviation in each algorithm, where once again, the standard deviation associated with our runs is lower than those associated with the Random Greedy and Sample Greedy algorithms, i.e., our algorithm's performance was stable in being consistent with the quality of its generated sets.

\section{Conclusion}
In this work, we have presented a novel algorithm for submodular maximization subject to cardinality constraint that combines a practical query complexity of $O(n+k^2)$ with an approximation guarantee of $0.385$, which improves over the $\nicefrac{1}{e}$-approximation of the state-of-the-art practical algorithm. In addition to giving a theoretical analysis of our algorithms, we have demonstrated their empirical superiority (compared to practical state-of-the-art methods) in various machine learning applications. Thus, we hope to see future work that can achieve the same approximation guarantee in clean linear time.

\bibliographystyle{plain}
\bibliography{main}

\appendix
\section{A warmup version of our algorithm}
\label{sec:warmup_methods}
In this section, we present a simpler version of our algorithm with the same general structure but exclude the speedup techniques used to obtain our main result. We present and analyze this algorithm. Inspired by Buchbinder et al.~\cite{buchbinder2019constrained}, and similar to Algorithm~\ref{alg:385_fast}, Algorithm~\ref{alg:main_algorithm_simple} comprises three steps: (i) searching for a good initial solution that guarantees a constant approximation to the optimal set. This is accomplished by running the Twin Greedy algorithm of~\cite{han2020deterministic}, (ii) finding an (approximate) local search optimum set $Z$ using a local search method. In this simple version, we use the classical local search algorithm which requires $O(nk^2)$ queries to the objective function. (iii) Lastly, using the set $Z$ obtained from the previous step, we attempt to improve the solution using the Random Greedy Algorithm suggested by Buchbinder et al.~\cite{buchbinder2014submodular}.

\subsection{Local search}
In what follows, we present a simple local search algorithm, which is the algorithm used to implement the first two steps of Algorithm~\ref{alg:main_algorithm_simple}. 

Algorithm~\ref{alg:main_algorithm_simple} begins by finding an initial solution $S_0$ using the Random Greedy algorithm from~\cite{buchbinder2014submodular}. The algorithm then proceeds as follows: (i) If $|S| < k$, it checks for an element $u$ such that adding $u$ to $S$ increases the function value by at least $(1 + \frac{\eps}{k})f(S)$. If such an element is found, it is added to $S$. (ii) If $|S| = k$, it looks for two elements $u \in \mathcal{N} \setminus S$ and $v \in S$ such that swapping $u$ and $v$ (i.e., removing $v$ from $S$ and adding $u$ to $S$) increases the function value by at least $(1 + \frac{\eps}{k})f(S)$. If such elements exist, the algorithm performs the swap. (iii) If $|S| > 0$ and no elements satisfy the previous two conditions, it checks for an element $v$ such that removing $v$ from $S$ increases the function value by $(1 + \frac{\eps}{k})f(S)$. If such an element is found, it is removed from $S$.

The algorithm continues to search for elements satisfying any of the above three conditions until no such elements exist. At this point, the algorithm terminates and returns the set $S$ as the final output.

\begin{algorithm}[!htb]
    \SetKwInOut{Input}{input}
    \SetKwInOut{Output}{output}

    \Input{A positive integer $k \geq 1$, a submodular function $f$, and an error parameter $\eps \in (0,1)$.}
    \Output{A set $S \subseteq \mathcal{N}$}    
    
    Initialize $S_0$ to be a feasible solution guaranteeing $c$-approximation for the problem for some constant $c\in(0,1]$\label{alg:LocalSearchInit}\\
    \While{$\mathrm{true}$}{

        \uIf{$\exists u \in \mathcal{N} \setminus S$ such that $\f{S + u} \geq \term{1 + \frac{\eps}{k}}\f{S}$ and $\abs{S} < k$}{
            
            $S \gets S + u$.
        }\uElseIf{$\exists u \in \mathcal{N} \setminus S, v \in S$ such that $\f{S + u - v} \geq \term{1 + \frac{\eps}{k}}\f{S}$ and $\abs{S} = k$}{
            
            $S \gets S - v + u$.
        }\uElseIf{$\exists v \in S$ such that $\f{S - v} \geq \term{1 + \frac{\eps}{k}}\f{S}$  and $\abs{S} > 0$}{
            $S \gets S - v$.
        }\uElse{
            \Return S
        }
    }
    
    \caption{$\localsearch\term{k,f}$}
    \label{alg:local_search}
\end{algorithm}

Let $\OPT$ be an optimal solution. The properties of Algorithm~\ref{alg:local_search} are formally established in Theorem~\ref{thm:localSearchGuarantee}. To prove this theorem, we first need the following lemma.

\begin{restatable}{lemma}{localsearchlem}
\label{lem:localsearch}
Given a positive integer $k$, a submodular function $f$ and an error parameter $\eps \in (0,1)$, Algorithm~\ref{alg:local_search} returns a set $S \subseteq \mathcal{N}$ of size at max $k$ such that
\begin{equation*}
\f{S} \geq \frac{\f{S \cup \OPT} + \f{S \cap \OPT}}{2 + \eps} \qquad \text{and} \qquad \frac{\f{S \cap \OPT}}{1 + \eps}\enspace,
\end{equation*}

while requiring $O_\eps\term{nk^2}$ queries to the objective function.
\end{restatable}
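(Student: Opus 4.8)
\textbf{Proof proposal for Lemma~\ref{lem:localsearch}.}

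The plan is to analyze the termination state of Algorithm~\ref{alg:local_search} and exploit the fact that, upon termination, none of the three local moves (insertion, swap, deletion) can improve the current solution $S$ by more than a multiplicative factor of $1 + \eps/k$. First I would bound the running time: each successful local move increases $f(S)$ by a factor of at least $1 + \eps/k$, and since $f$ is non-negative and $f(S_0) \geq c \cdot f(\OPT)$ for a constant $c$ while $f(S) \leq f(\OPT) \cdot \mathrm{poly}$ (more carefully, $f(S)$ can only be a bounded multiple of $f(\OPT)$ by submodularity after removing negative-marginal elements), the number of moves is $O(\eps^{-1} k \log(1/c)) = O_\eps(k)$. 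Each move requires scanning $O(nk)$ candidate pairs for the swap step (and $O(n)$ or $O(k)$ for the other two), so the total query cost is $O_\eps(nk^2)$, as claimed.

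Next, for the approximation guarantee, I would work with the terminal set $S$ and write $\OPT \setminus S = \{o_1, \dots, o_m\}$ and $S \setminus \OPT = \{s_1, \dots, s_\ell\}$ with $m \leq k$. The key structural facts at termination are: (a) if $|S| < k$, then $f(o \mid S) < \frac{\eps}{k} f(S)$ for every $o \in \OPT \setminus S$, because inserting $o$ would otherwise be an improving move; if $|S| = k$, then for every such $o$ and every $v \in S$, $f(S - v + o) < (1 + \eps/k) f(S)$, i.e. $f(o \mid S - v) - f(v \mid S - v) < \frac{\eps}{k} f(S)$; and (b) $f(v \mid S - v) > -\frac{\eps}{k} f(S)$ for every $v \in S$, since otherwise removing $v$ would be improving. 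The standard way to turn these into the two desired inequalities is to pick, for the swap bound, a suitable injection from $\OPT \setminus S$ into $S \setminus \OPT$ (padding $S$ with dummy/low-value elements if $|S| < k$, which is why the $|S| < k$ insertion case is handled separately), sum the swap inequalities over this injection, and combine with submodularity: $\sum_{o \in \OPT \setminus S} f(o \mid S) \geq f(S \cup \OPT) - f(S)$ and $\sum_{v \in S \setminus \OPT} f(v \mid S - v) \leq f(S) - f(S \cap \OPT)$ (the latter by submodularity, telescoping the removal of elements of $S \setminus \OPT$ one at a time). Together these yield $f(S \cup \OPT) - f(S) + f(S) - f(S \cap \OPT) \leq (1 + \eps) f(S)$ roughly, i.e. $f(S) \geq \frac{f(S \cup \OPT) + f(S \cap \OPT)}{2 + \eps}$ after tracking the $\eps$ terms carefully. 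For the second inequality, I would use only the deletion condition (b) summed over $v \in S \setminus \OPT$ together with the submodularity bound $f(S) - f(S \cap \OPT) \leq \sum_{v \in S \setminus \OPT} f(v \mid S - v) \leq |S \setminus \OPT| \cdot \frac{\eps}{k} f(S) \leq \eps f(S)$, giving $f(S \cap \OPT) \geq (1 - \eps) f(S)$; a small rearrangement (or starting from $(1+\eps/k)$ bounds stated multiplicatively) converts this to the stated $f(S) \geq f(S \cap \OPT)/(1+\eps)$ form.

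The main obstacle I anticipate is the bookkeeping around the case $|S| < k$ and the precise handling of the multiplicative $(1 + \eps/k)$ thresholds: one must argue that when $|S| < k$ the algorithm has effectively saturated all beneficial insertions, so we may pad $S$ with dummy elements (of marginal value close to zero) to make the counting argument for the swap injection go through while only losing $O(\eps)$ in the denominator. A secondary subtlety is confirming the polynomial bound on $f(S)/f(\OPT)$ needed for the iteration-count argument — this follows because at any point $f(S) \leq f(S \cup \OPT) \leq f(\OPT) + \sum_{o \in \OPT} f(o \mid \emptyset)$ and each such marginal is at most... — actually the cleanest route is to note monotonicity of $f(S)$ along the run (each move strictly increases it) combined with $f(S) \leq 2 f(\OPT)$-type bounds that hold whenever no improving move exists; since we only need the count to bound the number of improvements and each improvement multiplies the value by $1+\eps/k$ starting from $f(S_0) \geq c f(\OPT) > 0$, the number of improvements is $O(k \eps^{-1} \log(1/c))$, which is $O_\eps(k)$. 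I would also verify that $f(S_0) > 0$ can be assumed without loss of generality (otherwise $f(\OPT) \leq 0$ up to the approximation and the claim is trivial, or one restarts on dummy elements).
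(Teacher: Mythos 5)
Your proposal is correct and follows essentially the same route as the paper's proof: analyze the terminal set via the three no-improving-move conditions, sum the insertion/deletion/swap inequalities over $\OPT \setminus S$ and $S \setminus \OPT$, and combine with submodularity (telescoping) to obtain both bounds, with the iteration count following from the multiplicative $(1+\eps/k)$ progress starting at $f(S_0) \ge c\,f(\OPT)$ and capped by $f(\OPT)$. The only cosmetic differences are that you pair elements via an injection where the paper averages over all pairs in $(\OPT\setminus S)\times(S\setminus\OPT)$, and that your worry about a polynomial bound on $f(S)/f(\OPT)$ is unnecessary since $|S|\le k$ throughout, so $f(S)\le f(\OPT)$ directly.
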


\begin{proof}
First, let $E_{-}$, $E_{\pm}$, and $E_{+}$ be defined as follows.
\begin{itemize}
    \item $E_{-}$ be the event of $\exists v \in S$ such that $\f{S - v} \geq \term{1 + \frac{\eps}{k}}\f{S}$ and $\abs{S} > 0$.
    \item $E_{\pm}$ be the event of $\exists u \in \mathcal{N} \setminus S, v \in S$ such that $\f{S + u - v} \geq \term{1 + \frac{\eps}{k}}\f{S}$ and $\abs{S} = k$.
    \item $E_{+}$ be the event of $\exists u \in \mathcal{N} \setminus S$ such that $\f{S + u}\geq \term{1 + \frac{\eps}{k}}\f{S}$ and $\abs{S} < k$.
\end{itemize} 
  
Let $S$ be a subset in $\mathcal{N}$ of size at $k$ be the output of  Algorithm~\ref{alg:local_search}. By construction of the algorithm, $S$ ensured that each of the events $E_+, E_-$ and $E_{\pm}$ did not occur. 

To obtain the theoretical guarantees associated with the set $S$, we inspect the implications arising from each of the events above not occurring.

\paragraph{Handling the case where $E_{+}$ is not  holding.} Assume that $\abs{S} < k$. This case implies that for every $u \in \mathcal{N} \setminus S$,  it holds that $\term{1 + \frac{\eps}{k}}\f{S} > \f{S + u}$.

Summing the above term across every element in $\OPT \setminus S$ yields
\begin{equation*}
\begin{split}
\term{1 + \frac{\eps}{k}}\f{S} &\geq \f{S} + \frac{1}{\abs{\OPT \setminus S}} \sum\limits_{u \in \OPT \setminus S} \sterm{\f{S+ u} - \f{S}} \\
&\geq \f{S} + \frac{1}{\abs{\OPT \setminus S}} \term{\f{S \cup \OPT} - \f{S}}\enspace,
\end{split}
\end{equation*}
where the first inequality holds from the assumptions of this case, and the second inequality follows from the submodularity of $\f{\cdot}$. By simple rearrangement, we obtain that 
\begin{equation}
\begin{split}
\f{S} \geq \frac{1}{\frac{\eps}{k}\abs{\OPT \setminus S} + 1} \f{S \cup \OPT} \geq \frac{1}{1 + \eps} \f{S \cup \OPT}\enspace,
\end{split}
\end{equation}
where the last inequality holds since $\abs{\OPT \setminus S} \leq \abs{\OPT} \leq k$.

\paragraph{Handling the case where $E_{-}$ is not holding.} Similarly, by the case's assumption it holds that for every $v \in S$, $\term{1 + \frac{\eps}{k}}\f{S} > \f{S -v}$.

Summing the above term across every element in $S \setminus \OPT$, yields 
\begin{equation*}
\begin{split}
\term{1 + \frac{\eps}{k}}\f{S} &\geq \f{S} + \frac{1}{\abs{S \setminus \OPT}} \sum\limits_{v \in S \setminus \OPT} \sterm{\f{S - v} - \f{S}} \\
&\geq \f{S} + \frac{1}{\abs{S \setminus \OPT}} \term{\f{S \cap \OPT} - \f{S}}\enspace,
\end{split}
\end{equation*}
where the last inequality holds by submodularity of $\f{\cdot}$. By simple rearrangement, we obtain that 
\begin{equation}
\label{eq:handling_case_E-}
\begin{split}
\f{S} \geq \frac{1}{\frac{\eps}{k}\abs{S \setminus \OPT} + 1} \f{S \cap \OPT} \geq \frac{1}{1 + \eps} \f{S \cap \OPT}\enspace,
\end{split}
\end{equation}
where the last inequality holds since $\abs{S \setminus \OPT} \leq \abs{S} \leq k$.

\paragraph{Handling the case where $E_{\pm}$ is not  holding.} This case implies that for every pair $u,v$ where $u \in \mathcal{N} \setminus S$ and $v \in S$, $\term{1 + \frac{\eps}{k}}\f{S} \geq \f{S + u - v}$.

Summing the above term for every $v \in S \setminus \OPT$ and $u \in \OPT \setminus S$, yields that
\begin{equation}
\label{eq:last_case}
\term{1 + \frac{\eps}{k}} \f{S} \geq \frac{1}{\abs{S \setminus \OPT} \cdot \abs{\OPT \setminus S}} \sum\limits_{u \in \OPT \setminus S} \sum\limits_{v \in S \setminus \OPT} \f{S + u - v}\enspace.    
\end{equation} 

Observe that
\begin{equation*}
\begin{split}
&\sum\limits_{u \in \OPT \setminus S} \sum\limits_{v \in S \setminus \OPT} \f{S +u - v} - \f{S} = \sum\limits_{u \in \OPT \setminus S} \sum\limits_{v \in S \setminus \OPT} \f{S + u - v} - \f{S} \\
&\quad = \sum\limits_{u \in \OPT \setminus S} \sum\limits_{v \in S \setminus \OPT} \f{S + u - v} - \f{S- v} + \f{S - v} - \f{S}\enspace,
\end{split}
\end{equation*}
by submodularity it holds that
\begin{equation*}
\begin{split}
&\frac{1}{\abs{S \setminus \OPT}\abs{\OPT \setminus S}} \sum\limits_{u \in \OPT \setminus S} \sum\limits_{v \in S \setminus \OPT} \f{S \cup \br{u} \setminus \br{v}} - \f{S} \\
&\quad \geq \overbrace{\frac{\sum\limits_{u \in \OPT \setminus S} \sum\limits_{v \in S \setminus \OPT} \f{S \cup \br{u} \setminus \br{v}} - \f{S \setminus \br{v}}}{\abs{S \setminus \OPT} \cdot \abs{\OPT \setminus S}}}^\mathbb{A} \\
&\quad\quad + \underbrace{\frac{\sum\limits_{u \in \OPT \setminus S} \sum\limits_{v \in S \setminus \OPT} \f{S \setminus \br{v}} - \f{S}}{\abs{S \setminus \OPT} \cdot \abs{\OPT \setminus S}}}_{\mathbb{B}}\enspace.
\end{split}
\end{equation*}

\textbf{Bounding $\mathbb{A}$.} Note by submodularity of $\f{\cdot}$,
\begin{equation*}
\begin{split}
\sum\limits_{u \in \OPT \setminus S} \f{S + u - v} - \f{S - v} &\geq \sum\limits_{u \in \OPT \setminus S} \f{S + u} - \f{S} \\
&\geq \f{S \cup \OPT} - \f{S}\enspace.    
\end{split}
\end{equation*}
Hence, 
\begin{equation*}
\begin{split}
\frac{\sum\limits_{u \in \OPT \setminus S} \sum\limits_{v \in S \setminus \OPT} \f{S + u - v} - \f{S - v}}{\abs{S \setminus \OPT} \cdot \abs{\OPT \setminus S}} &\geq \frac{\sum\limits_{u \in \OPT \setminus S} \f{S + u } - \f{S}}{\abs{\OPT \setminus S}} \\
&\geq \frac{\f{S \cup \OPT} - \f{S}}{\abs{\OPT \setminus S}}\enspace.
\end{split}
\end{equation*}

\textbf{Bounding $\mathbb{B}$.} Using~\eqref{eq:handling_case_E-} gives
\begin{equation*}
\begin{split}
\frac{\sum\limits_{u \in \OPT \setminus S} \sum\limits_{v \in S \setminus \OPT} \f{S - v} - \f{S}}{\abs{S \setminus \OPT} \cdot \abs{\OPT \setminus S}} = \frac{\sum\limits_{v \in S \setminus \OPT} \f{S - v} - \f{S}}{\abs{S \setminus \OPT}} \geq \frac{\f{S \cap \OPT} - \f{S}}{\abs{S \setminus \OPT}}\enspace.    
\end{split}
\end{equation*}

Combining all of the above yields that 
\begin{equation*}
\begin{split}
\frac{\eps}{k}\f{S} \geq \frac{\f{S \cap \OPT} - \f{S}}{\abs{S \setminus \OPT}} + \frac{\f{S \cup \OPT} - \f{S}}{\abs{\OPT \setminus S}}\enspace,
\end{split}
\end{equation*}
whereby simple rearrangement, we obtain that
\begin{equation*}
\begin{split}
\f{S} &\geq \frac{\f{S \cap \OPT}}{1 + \frac{\eps}{k}\abs{S \setminus \OPT} + \frac{\abs{S \setminus \OPT}}{\abs{\OPT  \setminus S}}} +  \frac{\f{S \cup \OPT}}{1 + \frac{\eps}{k}\abs{\OPT \setminus S} + \frac{\abs{\OPT \setminus S}}{\abs{S \setminus \OPT}}} \\
&\geq \frac{\f{S \cup \OPT} + \f{S \cap \OPT}}{2 + \eps},
\end{split}
\end{equation*}
where the last inequality holds since $\abs{\OPT \setminus S} = \abs{S \setminus \OPT} \leq k$.
\end{proof}

We are now ready to present and prove the properties of Algorithm~\ref{alg:local_search} which are described in the Theorem below.
\begin{theorem}
\label{thm:localSearchGuarantee}
Given a positive integer $k$, a submodular function $f$, and an error parameter $\eps \in (0,1)$, there exists an algorithm that outputs a set $S \subseteq \mathcal{N}$ of size at max $k$ obeying 
\[
\f{S} \geq \frac{\f{S \cup \OPT} + \f{S \cap \OPT}}{2 + \eps} \qquad \text{and} \qquad \frac{\f{S \cap \OPT}}{1 + \eps}\enspace.
\]
Requiring  $O_\eps\term{nk^2}$ queries to the objective function.

\end{theorem}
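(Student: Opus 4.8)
The plan is to instantiate the algorithm promised by the theorem as Algorithm~\ref{alg:local_search} itself, so that the two approximation inequalities are delivered verbatim by Lemma~\ref{lem:localsearch}; the only remaining task is then to bound the number of oracle queries. I would first record the invariant that $\abs{S} \leq k$ is maintained throughout the execution: the guard on the $E_+$ branch forbids an insertion unless $\abs{S} < k$, the $E_\pm$ branch is size preserving, the $E_-$ branch only removes elements, and the initial set $S_0$ is feasible. Consequently $\f{S} \leq \f{\OPT}$ at every point of the execution, since $\OPT$ maximizes $f$ over all sets of size at most $k$.

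Next I would bound the number of iterations of the while loop. We may assume $\f{\OPT} > 0$, as otherwise non-negativity of $f$ forces every feasible set to have value $0$ and both inequalities hold trivially. Then the $c$-approximation guarantee of Line~\ref{alg:LocalSearchInit} gives $\f{S_0} \geq c\,\f{\OPT} > 0$. Each iteration either returns (the final one) or performs exactly one of the three moves, and every such move is executed only when it multiplies $\f{S}$ by a factor of at least $1 + \eps/k$; hence $\f{S}$ is non-decreasing across iterations and, combined with the ceiling $\f{S} \leq \f{\OPT}$, the number of moves is at most $\log_{1 + \eps/k}(1/c) = \ln(1/c)/\ln(1 + \eps/k) = O(\eps^{-1} k)$, using $\ln(1 + x) \geq x/2$ for $x \in (0,1]$ and treating $c$ as a constant. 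Thus the loop runs for $O_\eps(k)$ iterations.

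Finally I would account for the per-iteration cost. Testing the $E_+$ condition requires evaluating $\f{S + u}$ for each $u \in \mathcal{N} \setminus S$, i.e. $O(n)$ queries; testing $E_-$ requires $O(k)$ queries; and testing $E_\pm$ requires evaluating $\f{S + u - v}$ over all pairs $(u, v) \in (\mathcal{N} \setminus S) \times S$, i.e. $O(nk)$ queries, which dominates. Together with the $O(nk)$ queries spent computing $S_0$ (e.g., via the Random Greedy or Twin Greedy algorithm), the total is $O_\eps(k) \cdot O(nk) + O(nk) = O_\eps(nk^2)$, as claimed. The only genuinely delicate point is the interplay between the multiplicative progress bound and the bound $\f{S} \leq \f{\OPT}$ — in particular one must verify that each iteration performs at most one move and that strict positivity of $\f{S_0}$ is inherited from $\f{\OPT} > 0$; everything else is routine bookkeeping.
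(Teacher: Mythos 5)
Your proposal is correct and follows essentially the same route as the paper: invoke Lemma~\ref{lem:localsearch} for the two approximation inequalities, then bound the number of while-loop iterations by $O_\eps(k)$ via the multiplicative $(1+\eps/k)$ progress per move against the ceiling $\f{S}\leq\f{\OPT}$, and multiply by the $O(nk)$ per-iteration query cost. Your write-up is in fact slightly more careful than the paper's (handling $\f{\OPT}=0$ and itemizing the per-branch query counts explicitly), but the argument is the same.
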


\begin{proof}
First, we note that $S$ can be initialized at Line~\ref{alg:LocalSearchInit} of Algorithm~\ref{alg:local_search} by~\cite{han2020deterministic}, which gives an approximation $c = \term{\nicefrac{1}{4} - \eps}$, while requiring $O\term{\frac{n}{\eps}\ln{\frac{k}{\eps}}}$ queries to the objective function.

To obtain the lower bounds in Theorem~\ref{thm:localSearchGuarantee}, we use Lemma~\ref{lem:localsearch}. To finalize the proof, we bound the number of iterations $L$ needed to return a set $S$ obeying the lower bounds in Theorem~\ref{thm:localSearchGuarantee}. Observe that the function value of the set $S$ in Algorithm~\ref{alg:local_search} increases by a multiplicative factor of $\frac{\eps}{k}$ at each iteration of the while loop until no change is made. To this end, and since $S$ is initialized with a solution that is guaranteed to be lower bounded by $\term{\frac{1}{4} - \eps} \f{\OPT}$, we note that
\[
\f{OPT} \geq \f{S} \geq \term{1 + \frac{\eps}{k}}^L c\f{\OPT}\enspace,
\]
resulting in
\[
\frac{\ln\term{\frac{1}{c}}}{\ln\term{1 +\frac{\eps}{k}}} \geq L\enspace,
\]
where this gives that $\frac{2\ln{\frac{1}{c}}}{\eps}k \geq L$, implying that Algorithm~\ref{alg:local_search} requires $O\term{nk^2\eps^{-1}}$ queries to the objective function. 
\end{proof}

\subsection{Guided Random Greedy}
In this section, we present the Guided Random Greedy algorithm. The algorithm starts with an empty set, and adds one element in each iteration until it returns the final solution after $k$ iterations. In its first $\lceil k \cdot t_s \rceil $ iterations, the algorithm ignores the elements of $Z$, and in the rest of the iterations, it considers all elements. However, except for this difference, the behavior of the algorithm in all iterations is very similar. Specifically, in each iteration $i$ the algorithm does the following two steps. In Step (i) the algorithm finds a subset $M_i$ of size $k$ maximizing the sum of marginal gains of the elements $u\in M_i$ with respect to the current solution $S_{i-1}$. In step $(ii)$, the algorithm chooses a random element from $M_i$ and adds it to the solution.

\begin{algorithm}[htb!]
    \SetKwInOut{Input}{input}
    \SetKwInOut{Output}{output}

    \Input{A set $Z \subseteq \mathcal{N}$, a positive integer $k \geq 1$, a submodular function $f$ and  a flip point $t_s \in [0,1]$}
    \Output{A set $S \subseteq \mathcal{N}$}

    Initialize $S_0 \gets \emptyset$ \\
    \For{$i = 1$ to $\ceil{k \cdot t_s}$}{
        Let $M_i \subseteq \mathcal{N} \setminus \term{S_{i-1} \cup Z}$ be a subset of size $k$ maximizing $\sum\limits_{u \in M_i} \f{S_{i-1} + u} - \f{S_{i-1}}$.\\
        Let $u_i$ be a uniformly random element from $M_i$. \\
        $S_i \gets S_{i-1} + u_i$
    }
   \For{$i = \ceil{k \cdot t_s} + 1$ to $k$}{
        Let $M_i \subseteq \mathcal{N} \setminus S_{i-1}$ be a subset of size $k$ maximizing $\sum_{u \in M_i} \f{S_{i-1} \cup u} - \f{S_{i-1}}$.\\
        Let $u_i$ be a uniformly random element from $M_i$. \\
        $S_i \gets S_{i-1} + u_i$
    }    
    \caption{Guided Random Greedy}
    \label{alg:main_random_greedy}
\end{algorithm}
To prove the main Theorem of this algorithm (Theorem~\ref{thm:random_greedy_guarantee}), we first provide and prove two lemmas which will be useful in the proof of Theorem~\ref{thm:random_greedy_guarantee}. 

First, we define the Lov\'{a}sz extension of a submodular function $f \colon 2^{\mathcal{N}} \to \REAL$, which will be useful in the proof of Lemma~\ref{lem:S_i_union_A}.

The Lov\'{a}sz extension of $f$ is a function $\hat{f}\colon [0, 1]^\mathcal{N} \to \mathbb{R}$ defined as follows. For every vector $x \in [0, 1]^\mathcal{N}$,
\[
	\hat{f}(x)
	=
	\int_0^1 f(T_\lambda(x)) d\lambda
	\enspace,
\]
where $T_\lambda(x) \triangleq \{u \in \mathcal{N} \mid x_u \geq \lambda\}$. The Lov\'{a}sz extension of a submodular function is known to be convex. More important for us is the following known lemma regarding this extension. This lemma stems from an equality, proved by Lov\'{a}sz~\cite{lovasz1983submodular}, between the Lov\'{a}sz extension of a submodular function and another extension known as the convex closure.
\begin{lemma} \label{lem:lovasz}
Let $f\colon 2^{\mathcal{N}} \to \mathbb{R}$ be a submodular function, and let $\hat{f}$ be its Lov\'{a}sz extension. For every $\x \in [0, 1]^\mathcal{N}$ and random set $D_{x} \subseteq \mathcal{N}$ obeying $\Pr[u \in D_x] = x_u$ for every $u \in \mathcal{N}$ (i.e., the marginals of $D_x$ agree with $x$), $\hat{f}(x) \leq \mathbb{E}[f(D_x)]$.
\end{lemma}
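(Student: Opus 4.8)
The plan is to deduce the lemma from the classical identity of Lov\'{a}sz that, for a submodular function $f$, the Lov\'{a}sz extension $\hat f$ coincides with the \emph{convex closure} $f^-$, which is defined for $x \in [0,1]^{\mathcal{N}}$ by
\[
  f^-(x) \;=\; \min\Big\{ \E{f(D)} \;:\; D \subseteq \mathcal{N} \text{ is a random set with } \pr{u \in D} = x_u \text{ for every } u \in \mathcal{N} \Big\}\enspace,
\]
the minimum being attained since it is a linear program over the compact polytope of distributions with the prescribed marginals. Given this identity, the lemma is immediate: the distribution of the set $D_x$ from the statement is one of the feasible candidates in the minimum defining $f^-(x)$, and therefore $\hat f(x) = f^-(x) \le \E{f(D_x)}$.

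For completeness I would also sketch a self-contained proof of the inequality $\hat f(x) \le \E{f(D_x)}$. The first step is the observation that if $\lambda$ is drawn uniformly from $[0,1]$, then $T_\lambda(x)$ is itself a random set whose marginals agree with $x$ (indeed $\pr{u \in T_\lambda(x)} = \pr{\lambda \le x_u} = x_u$), and $\hat f(x) = \E{f(T_\lambda(x))}$ by the definition of $\hat f$. Hence it suffices to show that, among all random sets with marginals $x$, the ``threshold'' set $T_\lambda(x)$ minimizes the expected value of $f$. To establish this, take a distribution $\mathcal{D}$ over subsets of $\mathcal{N}$ with marginals $x$ that minimizes $\E{f(\cdot)}$ and, among all such minimizers, maximizes the potential $\Phi(\mathcal{D}) = \sum_{S} p_S \abs{S}^2$ (both extrema are attained by compactness). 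If the support of $\mathcal{D}$ contained two incomparable sets $A, B$, I would shift a mass $\delta = \min\{p_A,p_B\} > 0$ away from $A$ and $B$ onto $A \cap B$ and $A \cup B$; a routine check shows the marginals are unchanged, the objective changes by $\delta\big(f(A \cap B) + f(A \cup B) - f(A) - f(B)\big) \le 0$ by submodularity (so the new distribution is again a minimizer), while $\Phi$ increases by $2\delta\big(\abs{A} - \abs{A \cap B}\big)\big(\abs{B} - \abs{A \cap B}\big) > 0$, contradicting the choice of $\mathcal{D}$. Therefore the support of $\mathcal{D}$ is a chain, and a short argument shows that the only distribution supported on a chain whose marginals equal $x$ is precisely the threshold distribution $\{T_\lambda(x)\}_{\lambda \sim U[0,1]}$; hence $\hat f(x) = \E{f(T_\lambda(x))} = \E{f(D)} \le \E{f(D_x)}$.

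I expect the main obstacle to be purely a matter of making the uncrossing step rigorous: applying the uncrossings one at a time need not terminate in finitely many steps when the probabilities are real numbers, which is exactly why I would phrase the argument via a minimizer that additionally maximizes $\Phi$ (a face of a polytope, so a compact set on which the continuous function $\Phi$ attains its maximum) rather than as an iterative procedure. Everything else — verifying that the mass shift preserves the marginals, the single application of submodularity that keeps the objective non-increasing along an uncrossing, and the identification of the unique chain-supported distribution with the threshold distribution — is routine bookkeeping.
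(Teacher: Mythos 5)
Your primary argument is exactly the paper's: the paper does not prove this lemma but simply invokes Lov\'{a}sz's identity between the Lov\'{a}sz extension and the convex closure, from which the inequality is immediate since $D_x$ is a feasible distribution in the minimum defining the convex closure. Your additional self-contained uncrossing proof is also correct (the potential $\Phi$ does increase by $2\delta\bigl(\abs{A}-\abs{A\cap B}\bigr)\bigl(\abs{B}-\abs{A\cap B}\bigr)>0$ for incomparable $A,B$, and maximizing $\Phi$ over the compact face of minimizers properly sidesteps the non-termination issue), though it goes beyond anything the paper supplies.
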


\begin{restatable}{lemma}{SiUnionA}
\label{lem:S_i_union_A}
For every $i \in [k]$ and $A \subseteq \mathcal{N}$, it holds that
\begin{equation*}
\begin{split}
\E{\f{S_i \cup A}} \geq \term{1 - \frac{1}{k}}^{\beta_i} \cdot \f{A} - \term{\term{1 - \frac{1}{k}}^{\beta_i} - \term{1 - \frac{1}{k}}^{i-1}}\cdot\f{A \cup Z}.
\end{split}    
\end{equation*}
where $\beta_i = \max\br{0, i - \ceil{t_s \cdot k} - 1}$.
\end{restatable}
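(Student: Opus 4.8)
The plan is to prove the inequality by induction on $i$, via a one-step analysis of a single iteration of Algorithm~\ref{alg:main_random_greedy}. Fix an iteration $i$ and condition on everything up to (and including) the choice of the set $M_i$. Since $u_i$ is a uniformly random element of the $k$-element set $M_i$ and $S_i = S_{i-1} + u_i$, we have
\[
    \E{\f{S_i \cup A} \mid S_{i-1}, M_i} = \frac{1}{k} \sum_{u \in M_i} \f{(S_{i-1} \cup A) + u} = \f{S_{i-1} \cup A} + \frac{1}{k} \sum_{u \in M_i} f(u \mid S_{i-1} \cup A) \enspace.
\]
The elementary tool to be used throughout is that, for every set $B$ and every $k$-element set $M$, submodularity together with non-negativity yields $\sum_{u \in M} f(u \mid B) \geq f(B \cup M) - f(B) \geq -f(B)$. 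Applied with $B = S_{i-1} \cup A$, this already gives the ``cheap'' estimate $\E{\f{S_i \cup A} \mid S_{i-1}} \geq (1 - 1/k)\f{S_{i-1} \cup A}$, which, together with the inductive hypothesis, is exactly what drives the bound through the second phase ($i > \ceil{t_s k}$): there $M_i$ is allowed to meet $Z$, the estimate sheds one factor of $(1 - 1/k)$ per iteration, and this matches $\beta_i$ growing by one in each such step while the coefficient of $\f{A \cup Z}$ decays in parallel.

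The first phase ($i \leq \ceil{t_s k}$) is the substantive part, and here I would use that $M_i \cap Z = \emptyset$; combined with the inductive fact that $S_{i-1} \cap Z = \emptyset$ as well (no element of $Z$ is eligible to be selected before iteration $\ceil{t_s k} + 1$), this gives $(S_{i-1} \cup A) \cap (A \cup Z) = A$ and hence, by submodularity,
\[
    \f{S_{i-1} \cup A} \geq \f{S_{i-1} \cup A \cup Z} + \f{A} - \f{A \cup Z} \enspace.
\]
Plugging a bound of this shape into the one-step estimate (first sharpening the marginal term through $f(u \mid S_{i-1} \cup A) \geq f(u \mid S_{i-1} \cup A \cup Z)$) turns the recursion into one that, when unrolled from the base case $S_0 = \emptyset$, isolates $\f{A}$ with coefficient $1$ and leaves $\f{A \cup Z}$ with a coefficient of the form $-(1 - (1 - 1/k)^m)$ for a power $m$ that the next step will identify. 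The Lov\'{a}sz-extension lemma (Lemma~\ref{lem:lovasz}) is the natural device for showing $m = i-1$: in the first phase the random set $S_{i-1}$ has marginals that vanish on $Z$ and are at most $1 - (1 - 1/k)^{i-1}$ off $A \cup Z$, so $\E{\f{S_{i-1} \cup A \cup Z}}$ is at least $\hat{f}$ evaluated at the corresponding vector, and splitting the defining integral of $\hat{f}$ at the threshold $1 - (1-1/k)^{i-1}$ produces precisely the factor $(1 - 1/k)^{i-1}$ that multiplies $\f{A \cup Z}$ in the statement. Feeding the resulting first-phase bound into the second-phase recursion then extends it to all $i \in [k]$.

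The step I expect to be the main obstacle is the coefficient bookkeeping across the two phases, rather than any single inequality: one must check that the powers produced by the recursion line up exactly with $\beta_i = \max\{0, i - \ceil{t_s k} - 1\}$ and with the exponent $i - 1$ on $\f{A \cup Z}$, in particular at the phase boundary $i = \ceil{t_s k} + 1$ and at the base case, and one must do so without ever invoking monotonicity of $f$ --- so every inclusion-based comparison of function values has to be routed through submodularity together with non-negativity, which is what forces the somewhat delicate use of the identity $(S_{i-1} \cup A) \cap (A \cup Z) = A$ above. A secondary point to watch is that the ``cheap'' recursion loses a $(1 - 1/k)$ factor each time it is applied, so it must be invoked exactly as often as the target exponents dictate and no more.
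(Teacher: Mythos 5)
Your route is genuinely different from the paper's. The paper proves the lemma in one shot, with no induction: it bounds the marginal probability that any element lies in $S_i$ (by $1-(1-\nicefrac{1}{k})^{i-1}$ in general, and by $1-(1-\nicefrac{1}{k})^{\beta_i}$ for elements of $Z$, since those are only eligible in the second phase), applies Lemma~\ref{lem:lovasz} to the vector $\x{S_i}\vee\mathbf{1}_A$, and splits the defining integral of the Lov\'{a}sz extension at these two thresholds: the top slice contributes $(1-\nicefrac{1}{k})^{i-1}\f{A}$, the bottom slice is discarded by non-negativity, and the middle slice is handled by precisely your identity --- there $T_\lambda=B_\lambda\cup A$ with $B_\lambda\cap Z=\emptyset$, so $\f{B_\lambda\cup A}+\f{A\cup Z}\geq \f{B_\lambda\cup A\cup Z}+\f{A}\geq \f{A}$. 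Your induction replaces the integral-splitting by the one-step recursion $\E{\f{S_i\cup A}\mid S_{i-1}}\geq \f{S_{i-1}\cup A}-\tfrac{1}{k}\f{S_{i-1}\cup A\cup Z}$ combined with the upper bound $\f{S_{i-1}\cup A\cup Z}\leq \f{S_{i-1}\cup A}+\f{A\cup Z}-\f{A}$ (valid in the first phase because $(S_{i-1}\cup A)\cap(A\cup Z)=A$); this recursion does close and keeps the coefficient of $\f{A}$ equal to $1$ throughout the first phase, and the second phase is then the plain $(1-\nicefrac{1}{k})$-per-step loss, as you say. The combinatorial kernel is identical; the packaging differs, and yours can dispense with the extension entirely.

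Two points to fix. First, the Lov\'{a}sz step in your sketch points the wrong way: you propose to lower-bound $\E{\f{S_{i-1}\cup A\cup Z}}$ by $\hat{f}$ of its marginal vector, but that term enters your recursion with a \emph{negative} sign, so Lemma~\ref{lem:lovasz} (which gives $\hat{f}(x)\leq \E{f(D_x)}$) is unusable there; what you actually need is the submodularity upper bound on $\f{S_{i-1}\cup A\cup Z}$ that you already wrote down --- simply delete the extension from your argument. Second, the bookkeeping you flag is a real obstacle as stated: your one-step estimates naturally produce exponents $i$ and $\max\{0,\,i-\ceil{t_s k}\}$ rather than the stated $i-1$ and $\beta_i$ (at $i=\ceil{t_s k}+1$ the cheap step already costs a factor of $1-\nicefrac{1}{k}$ on $\f{A}$, whereas $\beta_i=0$ demands coefficient exactly $1$). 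Be aware that the paper's own proof carries the same off-by-one --- it asserts each element belongs to $S_i$ with probability at most $1-(1-\nicefrac{1}{k})^{i-1}$, which already fails at $i=1$ --- so the mismatch lies in the statement rather than in your method; the shifted exponents your induction yields are correct and suffice for the downstream bounds up to the $O(k^{-1})$ slack already present there.
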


\begin{proof}
Fix $i \in [k]$ and observe that each element from the set $\mathcal{N} \setminus S$ stays outside of $S_i$ with probability at least $1 - \frac{1}{k}$. This means each element belongs to $S_i$ with probability at max $1 - \term{1 - \frac{1}{k}}^{i-1}$.

To that end, to obtain the lower bounds in Lemma~\ref{lem:S_i_union_A}, we use the Lov\'{a}sz extension. Such a step requires the need for a vector in $[0,1]^{\mathcal{N}}$ representing $S_i$. We thus use the marginal vector of $S_i$, namely, $\x{S_i}$ which holds the probability of each item present in $S_i$. This in turn ensures that $\norm{\x{S_i}}_\infty \leq \term{1 - \term{1 - \frac{1}{k}}^{i-1}}$.

For every $A \subseteq \mathcal{N}$, let $\mathbf{1}_A$ denote a vector of size $\abs{\mathcal{N}}$ containing $1$s at the entries that correspond to elements present in $A$. By definition of $\T{\cdot}$, for every $\lambda \in \left(1 - \term{1 - \frac{1}{k}}^{i-1}, 1\right]$, $\T{\x{S_i} \vee \mathbf{1}_A} = A$ where $\vee$ denotes the coordinate-wise maximum operator.

Hence, 
\begin{equation*}
\begin{split}
\E{\f{S_i \cup A}} &\geq \lovasz{\x{S_i} \vee \mathbf{1}_A} = \int\limits_{0}^1 \f{\T{\x{S_i} \vee \mathbf{1}_A}} d\lambda\\
&\geq \int\limits_{\max\br{0, 1 - \term{1 - \frac{1}{k}}^{\beta_i}}}^{1 - \term{1 - \frac{1}{k}}^{i-1}} \f{\T{\x{S_i} \vee \mathbf{1}_A}} d\lambda + \int\limits_{1 - \term{1 - \frac{1}{k}}^{i-1}}^1 \f{\T{\x{S_i} \vee \mathbf{1}_A}} d\lambda \\
&= \int\limits_{\max\br{0, 1 - \term{1 - \frac{1}{k}}^{\beta_i}}}^{1 - \term{1 - \frac{1}{k}}^{i-1}} \f{\T{\x{S_i} \vee \mathbf{1}_A}} d\lambda  + \term{1 - \frac{1}{k}}^{i-1}\f{A} \\
&\geq  \term{\term{1 - \frac{1}{k}}^{\beta^i} - \term{1 - \frac{1}{k}}^{i-1}}\term{\f{A} - \f{A \cup Z}} + \term{1 - \frac{1}{k}}^{i-1}\f{A}\enspace,
\end{split}
\end{equation*}
where the last inequality follows for some $B_\lambda \subset \mathcal{N} \setminus Z$ such that $\T{\x{S_i} \vee \mathbf{1}_A} = B_\lambda \cup A$ and observing that 
\begin{equation*}
\begin{split}
\f{\T{\x{S_i} \vee \mathbf{1}_A}} = \f{B_\lambda \cup A} \geq  \f{A} + \f{A \cup B_\lambda \cap Z} - \f{A \cup Z} \geq  \f{A} - \f{A \cup Z}\enspace,
\end{split}
\end{equation*}
where the first inequality above follows from submodularity of $\f{\cdot}$, and the second inequality holds by non-negativity of $\f{\cdot}$.
\end{proof}

With the above result, we can now bound the expected value of $\f{S_i}$ for every $i \in [k]$ as follows.

\begin{restatable}{lemma}{randomGreedyExpectationBound}
\label{lem:expectation_bound}
Let $k$ a positive integer and $\alpha = 1 - \frac{1}{k}$. Then for every positive integer $i \leq \ceil{t_s \cdot k}$
\begin{equation*}
\E{\f{S_i}} \geq \term{1 - \alpha^i}\f{\OPT \setminus Z} - \term{1 - \alpha^{i} - \frac{i}{k}\alpha^{i-1}} \f{\OPT \cup Z} + \alpha^i \f{S_0} \enspace,
\end{equation*}
and for every positive integer $i \in \sterm{\ceil{t_s \cdot k} + 1, k}$ it holds that
\begin{equation*}
\E{\f{S_i}} \geq \frac{i - \ceil{t_s \cdot k}}{k} \alpha^{i - \ceil{t_s \cdot k} - 1}\f{\OPT} - \frac{i -\ceil{t_s \cdot k}}{k} \term{\alpha^{i - \ceil{t_s \cdot k} - 1} - \alpha^{i-1}}\f{\OPT \cup Z}\enspace.
\end{equation*}
\end{restatable}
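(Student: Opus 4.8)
\textbf{Proof proposal for Lemma~\ref{lem:expectation_bound}.}

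The plan is to prove both bounds by induction on $i$, using Lemma~\ref{lem:S_i_union_A} as the engine for controlling the $\f{\OPT \cup Z}$ term that appears in the recursion. First I would set up the one-step recursion. In iteration $i$, the algorithm (Algorithm~\ref{alg:main_random_greedy}) picks a set $M_i$ of size $k$ maximizing the sum of marginals $\sum_{u \in M_i} \f{u \mid S_{i-1}}$ over the allowed ground set, then adds a uniformly random $u_i \in M_i$. Since $|M_i| = k$, we get $\E{\f{u_i \mid S_{i-1}} \mid S_{i-1}} = \frac1k \sum_{u \in M_i}\f{u \mid S_{i-1}} \geq \frac1k \sum_{u \in T}\f{u \mid S_{i-1}}$ for any feasible comparison set $T$ of size at most $k$; by submodularity this is at least $\frac1k\term{\f{S_{i-1} \cup T} - \f{S_{i-1}}}$. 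For the first phase ($i \leq \ceil{t_s k}$) the natural choice is $T = \OPT \setminus Z$ (which is disjoint from $Z$, hence a legal choice even though $Z$ is excluded from $M_i$), and possibly $T = \OPT \setminus S_{i-1}$ — I'd pick whichever makes the telescoping clean; for the second phase ($i > \ceil{t_s k}$) the choice is $T = \OPT$ (or $\OPT \setminus S_{i-1}$). This yields
\[
\E{\f{S_i}} \geq \term{1 - \tfrac1k}\E{\f{S_{i-1}}} + \tfrac1k \E{\f{S_{i-1} \cup T}}\enspace,
\]
and then I substitute the lower bound on $\E{\f{S_{i-1}\cup T}}$ from Lemma~\ref{lem:S_i_union_A} (with $A = T$).

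Next I would run the induction. For the base case of the first phase, $i=1$: $\beta_0 = 0$, the recursion gives $\E{\f{S_1}} \geq \alpha \f{S_0} + \tfrac1k\term{\f{S_0 \cup (\OPT\setminus Z)} - \f{S_0}}$, and one checks this matches $\term{1-\alpha^1}\f{\OPT\setminus Z} - \term{1 - \alpha - \tfrac1k}\f{\OPT\cup Z} + \alpha\f{S_0}$ after using submodularity/non-negativity to compare $\f{S_0 \cup(\OPT\setminus Z)}$ with $\f{\OPT\setminus Z}$ and $\f{\OPT\cup Z}$ (note $1-\alpha-\tfrac1k = 0$, so actually the $\f{\OPT\cup Z}$ coefficient vanishes at $i=1$, which is consistent). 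For the inductive step in phase one, plug the phase-one inductive hypothesis for $i-1$ into the recursion together with Lemma~\ref{lem:S_i_union_A} (where for $i-1 \le \ceil{t_sk}$ we have $\beta_{i-1}=0$, so the lemma just says $\E{\f{S_{i-1}\cup A}} \ge \f{A} - \term{1 - \alpha^{i-2}}\f{A\cup Z}$ — wait, more precisely $\ge \alpha^{0}\f A - (\alpha^0 - \alpha^{i-2})\f{A\cup Z} = \f A - (1-\alpha^{i-2})\f{A\cup Z}$), and collect the coefficients of $\f{\OPT\setminus Z}$, $\f{\OPT\cup Z}$, $\f{S_0}$; the arithmetic should telescope to exactly the claimed closed form, the key identity being $\alpha\cdot(1-\alpha^{i-1}) + \tfrac1k\cdot 1 = 1 - \alpha^i$ and $\alpha\cdot\term{1 - \alpha^{i-1} - \tfrac{i-1}{k}\alpha^{i-2}} + \tfrac1k(1-\alpha^{i-2}) = 1 - \alpha^i - \tfrac{i}{k}\alpha^{i-1}$ (this last identity I'd verify carefully — it is where the $\tfrac{i}{k}\alpha^{i-1}$ shape comes from). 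For the second phase, the base case is $i = \ceil{t_sk}+1$, where I restart the recursion with $T=\OPT$ and use the phase-one bound at $i-1 = \ceil{t_sk}$ to handle $\E{\f{S_{i-1}}}$ and $\E{\f{S_{i-1}\cup\OPT}}$; here $\beta_{i-1} = 0$ still, so Lemma~\ref{lem:S_i_union_A} again gives a clean bound, and a convenient simplification is to simply drop the (non-negative, since $f\ge 0$) contributions of $\f{S_0}$, $\f{\OPT\setminus Z}$ terms, keeping only what is needed to reach the stated weaker phase-two bound. Then induct up to $k$ exactly as in phase one, now with $\beta_{i-1} = i-1-\ceil{t_sk}-1$ in Lemma~\ref{lem:S_i_union_A}.

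The main obstacle I anticipate is the bookkeeping in the phase-two induction: the exponents shift from $\alpha^{i}$ to $\alpha^{i - \ceil{t_sk}-1}$, and one must be careful that the coefficient $\tfrac{i - \ceil{t_sk}}{k}$ emerges correctly from the recursion $\alpha\cdot\tfrac{(i-1)-\ceil{t_sk}}{k}\alpha^{(i-1)-\ceil{t_sk}-1} + \tfrac1k \cdot 1 \cdot(\text{something})$ — the ``$+\tfrac1k$'' from the new greedy step against $\OPT$ has to combine with the old linear-in-$i$ coefficient, and verifying $\alpha\cdot\tfrac{j}{k}\alpha^{j-1} + \tfrac1k\alpha^{j} = \tfrac{j+1}{k}\alpha^{j}$ (with $j = i-1-\ceil{t_sk}$) is the crux. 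I'd also need to double-check the floor/ceiling consistency at the phase boundary and that $\OPT\setminus Z$ really is an admissible choice for $M_i$ in phase one (it is, since $M_i$ is drawn from $\mathcal N\setminus(S_{i-1}\cup Z)$ and we may take the subset $(\OPT\setminus Z)\setminus S_{i-1}$, with submodularity absorbing the $S_{i-1}$ part). Everything else is routine application of submodularity, non-negativity, and the two supplied lemmas.
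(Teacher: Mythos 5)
Your proposal follows essentially the same route as the paper's proof: the same one-step recursion $\E{\f{S_i}} \geq \alpha\E{\f{S_{i-1}}} + \frac{1}{k}\E{\f{S_{i-1}\cup A}}$ with $A = \OPT\setminus Z$ in phase one and $A = \OPT$ in phase two, the same substitution of Lemma~\ref{lem:S_i_union_A}, and the same two-phase induction with the same telescoping identities. The only caveat is the off-by-one in the exponent ($\alpha^{i-2}$ versus $\alpha^{i-1}$) that you yourself flagged for verification; the paper's own write-up contains the same ambiguity, and it perturbs the coefficient of $\f{\OPT\cup Z}$ only by an $O(k^{-2})$ term per step.
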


\begin{proof}
Fix $i \in [k]$, and let $E_{i}$ be the event of fixing all the random decisions in Algorithm~\ref{alg:main_random_greedy} up to iteration $i-1$ (including). Let $\mathcal{E}_i$ be the set of all possible events $E_i$. Following the analysis of~\cite{buchbinder2014submodular}, we observe that 
\begin{equation*}
\begin{split}
\E{\f{u_i \mid S_{i-1}}} = k^{-1} \sum\limits_{u \in M_i}\term{\f{S_{i-1} \cup u} - \f{S_{i-1}}} &\geq k^{-1} \sum\limits_{u \in A} \term{\f{S_{i-1} \cup u} - \f{S_{i-1}}}\\
&\geq k^{-1}\term{\f{S_{i-1} \cup A} - \f{S_{i-1}}}\enspace,
\end{split}
\end{equation*}
where the last inequality holds by Algorithm~\ref{alg:main_random_greedy}, and $A = \OPT \setminus Z$ for any integer $i \leq \ceil{t_s \cdot k}$, while $A = \OPT$ otherwise. Since $\f{S_{i-1}}$ is independent of $M_i$, it holds that $\E{\f{u_i \mid S_{i-1}}} = \E{\f{S_i} - \f{S_{i-1}}} = \E{\f{S_i}} - \f{S_{i-1}}$. Hence, we obtain that $\E{\f{S_i}} \geq k^{-1}\f{S_i \cup A} + \alpha \f{S_{i-1}}$.

Unfixing $E_i$, $i$ and taking an expectation over all possible such events yields that for every $i \in [k]$, 
\[
\E{\f{u_i \mid S_{i-1}}} \geq k^{-1}\term{\E{\f{S_{i-1} \cup A}} - \E{\f{S_{i-1}}}}\enspace.
\]

\paragraph{Handling the case of $i \leq \ceil{t_s \cdot k}$.} Plugging $A:= \OPT \setminus Z$ into Lemma~\ref{lem:S_i_union_A}, yields that
\begin{equation}
\label{eq:expectation_S_i_recurrence_relation}
\begin{split}
\E{\f{S_i}} \geq \frac{1}{k}\f{\OPT \setminus Z} - \frac{1}{k}\term{1 - \alpha^{i-1}}\f{\OPT \cup Z} +\alpha \E{\f{S_{i-1}}}\enspace.
\end{split}
\end{equation}

We use induction over every positive integer $i \leq \ceil{t_s \cdot k}$. The lemma holds trivially by plugging $i := 1$ into~\eqref{eq:expectation_S_i_recurrence_relation}.  Assume that for every $ 1\leq j < i$, it holds that 
\begin{equation*}
\begin{split}
\E{\f{S_j}} &\geq \term{1 - \alpha^{j}} \f{\OPT \setminus Z} - \term{1 - \alpha^{j} - jk^{-1}\alpha^{j-1}} \f{\OPT \cup Z} + \alpha^{j} \f{S_0}\enspace.
\end{split}
\end{equation*}
Let us prove it for $i > 1$.
\begin{equation*}
\begin{split}
\E{\f{S_i}} &\geq \frac{1}{k}\f{\OPT \setminus Z} - \frac{1}{k}\term{1 - \alpha^{i-2}}\f{\OPT \cup Z} + \alpha\E{\f{S_{i-1}}}\\
&\geq  \frac{1}{k}\f{\OPT \setminus Z} - \frac{1}{k}\term{1 - \alpha^{i-2}}\f{\OPT \cup Z} + \alpha \left( \term{1 - \alpha^{i-1}} \f{\OPT \setminus Z} \right. \\
&\left.\quad - \term{1 - \alpha^{i-1} - (i-1)k^{-1}\alpha^{i-1}} \f{\OPT \cup Z} +
\alpha^{i-1} \f{S_0} \right) \\
&= \term{1 - \alpha^i}\f{\OPT \setminus Z} - \term{\alpha - \alpha^{i} - \frac{i-1}{k}\alpha^{i-1} + \frac{1}{k} - \frac{\alpha^{i-1}}{k}} \f{\OPT \cup Z} \\
&\quad\quad + \alpha^{i}\f{S_0}\\
&= \term{1 - \alpha^{i}}\f{\OPT \setminus Z} - \term{1 - \alpha^i - \frac{i}{k}\alpha^{i-1}}\f{\OPT \cup Z} + \alpha^{i} \f{S_0}\enspace,
\end{split}    
\end{equation*}
where the first and second equalities hold by definition of $\alpha$.

\paragraph{Handling the case of $i > \ceil{t_s \cdot k}$.}
Similarly, plugging $A:=\OPT$ into Lemma~\ref{lem:S_i_union_A} yields
\[
\E{\f{S_i}} \geq \frac{\alpha^{i - \ceil{t_s \cdot k} - 1}}{k}\f{\OPT} - \frac{1}{k}\term{\alpha^{i - \ceil{t_s \cdot k} - 1} - \alpha^{i-1}} \f{\OPT \cup Z} + \alpha \E{\f{S_{i-1}}}\enspace.
\]

Also here, we will prove Lemma~\ref{lem:expectation_bound} for this case using induction over $i$. For the case of $i = \ceil{t_s \cdot k} + 1$, it holds that 
\[
\E{\f{S_i}} \geq \frac{1}{k}\f{\OPT} - \frac{1}{k}\term{1 - \alpha^{\ceil{t_s \cdot k}}} \f{\OPT \cup Z} + \alpha \E{\f{S_{\ceil{t_s \cdot k}}}}\enspace.
\]

Assume that for every $ \ceil{t_s \cdot k} + 1 \leq j < i$, it holds that 
\begin{equation*}
\begin{split}
\E{\f{S_j}} &\geq \frac{j - \ceil{t_s \cdot k}}{k} \alpha^{j - \ceil{t_s \cdot k} - 1}\f{\OPT} - \frac{j - \ceil{t_s \cdot k}}{k}\term{\alpha^{j - \ceil{t_s \cdot k} - 1} - \alpha^{i-1}} \f{\OPT \cup Z} \\
&\quad\quad + \alpha^{j - \ceil{t_s \cdot k}} \E{\f{S_{\ceil{t_s \cdot k}}}}\enspace.
\end{split}
\end{equation*}
Let us prove it for $i > \ceil{t_s \cdot k} + 1$.
\begin{align*}
\E{\f{S_i}} &\geq \frac{\alpha^{i - \ceil{t_s \cdot k} - 1}}{k}\f{\OPT} - \frac{1}{k}\term{\alpha^{i - \ceil{t_s \cdot k} - 1} - \alpha^{i-1}} \f{\OPT \cup Z} + \alpha \f{S_{i-1}}\\
&\geq  \frac{i - \ceil{t_s \cdot k}}{k} \alpha^{i - \ceil{t_s \cdot k} - 1}\f{\OPT} \\
&\quad - \frac{1}{k}\term{\alpha^{i - \ceil{t_s \cdot k} - 1} - \alpha^{i-1} + \term{i - 1 - \ceil{t_s \cdot k}}\term{\alpha^{i - \ceil{t_s \cdot k} -1} - \alpha^{i - 1}}}\f{\OPT \cup Z} \\
&\quad + \alpha^{i - \ceil{t_s \cdot k}} \cdot \f{S_{\ceil{t_s \cdot k}}} \\
&= \frac{i - \ceil{t_s \cdot k}}{k} \alpha^{i - \ceil{t_s \cdot k} - 1}\f{\OPT} - \frac{i -\ceil{t_s \cdot k}}{k} \term{\alpha^{i - \ceil{t_s \cdot k} - 1} - \alpha^{i-1}}\f{\OPT \cup Z} \\
&\quad\quad + \alpha^{i - \ceil{t_s \cdot k}} \cdot \f{S_{\ceil{t_s \cdot k}}}. \qedhere
\end{align*}    
\end{proof}

We can obtain a bound on $\E{\f{S_k}}$ independent of $i$ with the above result.
\begin{restatable}{theorem}{randomGreedy}
\label{thm:random_greedy_guarantee}
There exists an algorithm that given a positive integer $k$, a value $t_s \in [0, 1]$, a non-negative submodular function $f\colon 2^{\mathcal{N}} \to \mathbb{R}$, and a set $Z \subseteq \mathcal{N}$ obeying the inequalities given in Theorem~\ref{thm:localSearchGuarantee}, outputs a solution $S_k$, obeying 
\begin{equation*}
\begin{split}
\E{\f{S_k}} &\geq \term{\frac{k - \ceil{t_s \cdot k}}{k} \alpha^{k - \ceil{t_s \cdot k} - 1} + \alpha^{k - \ceil{t_s \cdot k}} - \alpha^{k}}\f{\OPT} + \\
&\quad + \term{\alpha^k + \alpha^{k-1} - \frac{2k - \ceil{t_s \cdot k} - 1}{k} \alpha^{k - \floor{t_s \cdot k}}}\f{\OPT \cup Z} \\
&\quad + \term{\alpha^k - \alpha^{k - \ceil{t_s \cdot k}} } \f{\OPT \cap Z}.
\end{split}
\end{equation*}
Furthermore, this algorithm requires only $O(nk)$ queries to the objective function.
\end{restatable}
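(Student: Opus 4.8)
The plan is to obtain Theorem~\ref{thm:random_greedy_guarantee} as a short ``gluing'' argument on top of Lemma~\ref{lem:expectation_bound}, which already carries out all the iteration-by-iteration analysis of Algorithm~\ref{alg:main_random_greedy} (this is the same skeleton used for the accelerated variant in Theorem~\ref{thm:lazy_greedy_guarantee}, only without the $\eps$ error terms). The query bound is immediate: the algorithm performs $k$ iterations, and in iteration $i$ it only needs the marginal $\f{S_{i-1} + u} - \f{S_{i-1}}$ of each of the at most $n$ remaining ground-set elements in order to pick the $k$ elements with the largest marginals, which costs $O(n)$ queries; hence $O(nk)$ queries in total.

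For the approximation guarantee, I would instantiate Lemma~\ref{lem:expectation_bound} twice and combine. First, take the second-case bound at $i = k$, in the form established inside the lemma's proof that keeps the cross-phase summand (the printed statement drops the non-negative $\alpha^{k - \ceil{t_s \cdot k}} \E{\f{S_{\ceil{t_s \cdot k}}}}$ term, but we need it to recover the $\f{\OPT \cap Z}$ contribution):
\[
\E{\f{S_k}} \geq \term{\tfrac{k - \ceil{t_s \cdot k}}{k}\alpha^{k - \ceil{t_s \cdot k} - 1}}\f{\OPT} - \term{\tfrac{k - \ceil{t_s \cdot k}}{k}\term{\alpha^{k - \ceil{t_s \cdot k} - 1} - \alpha^{k-1}}}\f{\OPT \cup Z} + \alpha^{k - \ceil{t_s \cdot k}}\E{\f{S_{\ceil{t_s \cdot k}}}}\enspace.
\]
Second, substitute for $\E{\f{S_{\ceil{t_s \cdot k}}}}$ the first-case bound of Lemma~\ref{lem:expectation_bound} at $i = \ceil{t_s \cdot k}$ (which for $t_s = 0$ degenerates to $\E{\f{S_0}} = \f{\emptyset}$), namely $\E{\f{S_{\ceil{t_s \cdot k}}}} \geq \term{1 - \alpha^{\ceil{t_s \cdot k}}}\f{\OPT \setminus Z} - \term{1 - \alpha^{\ceil{t_s \cdot k}} - \tfrac{\ceil{t_s \cdot k}}{k}\alpha^{\ceil{t_s \cdot k} - 1}}\f{\OPT \cup Z} + \alpha^{\ceil{t_s \cdot k}}\f{S_0}$.

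After the substitution I would simplify in three ordered steps: (i) drop the resulting $\alpha^k \f{S_0}$ term, since $\f{S_0} = \f{\emptyset} \geq 0$ by non-negativity of $f$; (ii) note that the coefficient of $\f{\OPT \setminus Z}$ equals $\alpha^{k - \ceil{t_s \cdot k}}(1 - \alpha^{\ceil{t_s \cdot k}}) = \alpha^{k - \ceil{t_s \cdot k}} - \alpha^k \geq 0$, so submodularity in the form $\f{\OPT \setminus Z} \geq \f{\OPT} + \f{\emptyset} - \f{\OPT \cap Z} \geq \f{\OPT} - \f{\OPT \cap Z}$ may be used to eliminate $\f{\OPT \setminus Z}$; (iii) collect the powers of $\alpha$ in each of the three resulting coefficients, using the identity $\tfrac{k - \ceil{t_s \cdot k}}{k} + \alpha = \tfrac{2k - \ceil{t_s \cdot k} - 1}{k}$ (recall $\alpha = 1 - \tfrac1k$) to merge the two $\f{\OPT \cup Z}$ contributions into a single term, and the relations $\floor{t_s \cdot k} \leq \ceil{t_s \cdot k} \leq \floor{t_s \cdot k} + 1$ to rewrite the remaining exponents into the $\floor{\cdot}/\ceil{\cdot}$ pattern appearing in the statement; the coefficients of $\f{\OPT}$ and $\f{\OPT \cap Z}$ come out directly as $\tfrac{k - \ceil{t_s \cdot k}}{k}\alpha^{k - \ceil{t_s \cdot k} - 1} + \alpha^{k - \ceil{t_s \cdot k}} - \alpha^k$ and $\alpha^k - \alpha^{k - \ceil{t_s \cdot k}}$.

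The main obstacle is the algebraic bookkeeping in step (iii): several $\alpha$-powers must be merged, and each loosening in steps (i)--(ii) has to be checked to push the inequality in the correct direction (hence the explicit sign check on the coefficient of $\f{\OPT \setminus Z}$, and the reliance on $\f{\OPT \cup Z}, \f{\OPT \cap Z} \geq 0$ whenever a coefficient is replaced). The only other point to watch is that Lemma~\ref{lem:expectation_bound} must be invoked in the stronger form proven (retaining the $\alpha^{k - \ceil{t_s \cdot k}}\E{\f{S_{\ceil{t_s \cdot k}}}}$ term) rather than the weaker form stated; once that term is in hand, the rest is a routine computation and the query-complexity claim needs no further work.
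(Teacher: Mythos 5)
Your proposal is correct and follows essentially the same route as the paper: combine the first case of Lemma~\ref{lem:expectation_bound} at $i=\ceil{t_s\cdot k}$ with the second case at $i=k$ (in the form retaining the $\alpha^{k-\ceil{t_s\cdot k}}\E{\f{S_{\ceil{t_s\cdot k}}}}$ summand, as the lemma's induction actually establishes), apply $\f{\OPT\setminus Z}\geq\f{\OPT}-\f{\OPT\cap Z}$, drop the non-negative $\f{S_0}$ terms, and count $O(n)$ marginal evaluations per iteration. Your explicit remark that the lemma must be invoked in its stronger, proof-internal form is a point the paper leaves implicit, but it does not change the argument.
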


\begin{proof}
First, note that by submodularity and non-negativity of $\f{\cdot}$,
\[
\f{\OPT \setminus Z} \geq \f{\emptyset} + \f{\OPT} - \f{\OPT \cap Z} \geq \f{\OPT} - \f{\OPT \cap Z}.
\]
The lower bound in Theorem~\ref{thm:random_greedy_guarantee} follows by combining the above inequality with the following two inequalities: the inequality arising by plugging $i := \ceil{t_s \cdot k}$ into the first case of Lemma~\ref{lem:expectation_bound}, and the inequality arising by plugging $i := k$ into the second case of Lemma~\ref{lem:expectation_bound}. Note that we removed all terms involving $\f{S_0}$ as they are non-negative.

To conclude the proof of Theorem~\ref{thm:random_greedy_guarantee}, observe that Algorithm~\ref{alg:main_random_greedy} at each iteraion $i \in [k]$, goes over $O\term{n}$ items, where the marginal gain is computed separately for each of the items with respect to the set $S_{i-1}$. This process is repeated $O(k)$ times, resulting in a total of $O(nk)$ queries to the objective function.
\end{proof}

\subsection{0.385-Approximation Guarantee}
In what follows, we present the Algorithm that is used to prove Theorem~\ref{thm:main_simple}. The algorithm returns the better among the two sets produced in the last two steps (i.e., the output sets of Algorithm~\ref{alg:local_search}, and Algorithm~\ref{alg:main_random_greedy}).

\begin{algorithm}[H]
    \SetKwInOut{Input}{input}
    \SetKwInOut{Output}{output}

    \Input{A positive integer $k \geq 1$, a submodular function $f$, $t_s \in [0,1]$ a timestamp threshold, and probability $p \in [0,1]$}
    \Output{A set $S \subseteq \mathcal{N}$}

    $S_{1} \gets \localsearch\term{k, f}$\\
    $S_2 \gets $ the output of the execution of either Algorithm~\ref{alg:main_random_greedy} or Algorithm~\ref{alg:main_lazier_than_lazy_greedy}\\
    
    \Return $\max\{f(Z),f(A)\}$
    \caption{Warmup Algorithm}
    \label{alg:main_algorithm_simple}
\end{algorithm}
Next, we present the approximation guarantee of this algorithm.

\begin{theorem}[Approximation guarantee] \label{thm:main_simple}
Given an integer $k \geq 1$ and a non-negative submodular function $f\colon 2^{\mathcal{N}} \to \mathbb{R}$, there exists an $0.385$-approximation algorithm for the problem of finding a set $S \subseteq \mathcal{N}$ of size at most $k$ maximizing $f$. This algorithm uses $O_\eps\term{n \cdot k^2}$ queries to the objective function.
\end{theorem}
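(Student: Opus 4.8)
The plan is to assemble Theorem~\ref{thm:main_simple} from the three building blocks already established for the warmup algorithm: the local search guarantee (Theorem~\ref{thm:localSearchGuarantee}), the Guided Random Greedy guarantee (Theorem~\ref{thm:random_greedy_guarantee}), and the observation that Algorithm~\ref{alg:main_algorithm_simple} returns the better of the two produced sets. First I would run $\localsearch(k,f)$ to obtain a set $S_1 = Z$ that, by Theorem~\ref{thm:localSearchGuarantee}, satisfies $\f{Z} \geq \frac{\f{Z \cup \OPT} + \f{Z \cap \OPT}}{2 + \eps}$ and $\f{Z} \geq \frac{\f{Z \cap \OPT}}{1 + \eps}$. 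Then I would feed this $Z$ into Algorithm~\ref{alg:main_random_greedy} and invoke Theorem~\ref{thm:random_greedy_guarantee} to get a set $S_2 = A$ whose expected value is bounded below by the explicit $\alpha$-polynomial expression in $\f{\OPT}$, $\f{\OPT \cup Z}$, and $\f{\OPT \cap Z}$ (with $\alpha = 1 - \tfrac1k$).

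Next, since the algorithm outputs $\max\{\f{Z}, \f{A}\}$, I would lower bound $\E{\max\{\f{A},\f{Z}\}}$ by an arbitrary convex combination $p_1 \cdot (\text{local search bound on } \f{Z}) + p_2 \cdot (\text{second local search bound on } \f{Z}) + p_3 \cdot (\text{Theorem~\ref{thm:random_greedy_guarantee} bound on } \E{\f{A}})$, exactly as in Inequality~\eqref{eq:max_inequality} of the proof of Lemma~\ref{lem:almost_main}, but now without the $\eps$-error terms coming from the accelerated subroutines. I would then apply the same asymptotic simplifications used there — namely $\alpha^{k(1-t_s)} \geq e^{t_s-1}(1-\tfrac1k)^{1-t_s}$, $\ceil{t_s k} \leq t_s k + 1$, and $e^{t_s-1} \leq 1$ — to reduce the coefficients of $\f{\OPT}$, $\f{\OPT \cup Z}$, and $\f{\OPT \cap Z}$ to clean closed forms up to an additive $O(k^{-1})$ term. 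At this point I would cite~\cite{buchbinder2019constrained} for the existence of a choice of $p_1, p_2, p_3, t_s$ under which the resulting three coefficients combine (using the two local search inequalities relating $\f{Z}$, $\f{Z\cup\OPT}$, $\f{Z\cap\OPT}$) to yield $\E{\max\{\f{A},\f{Z}\}} \geq (c - O(k^{-1}))\f{\OPT}$ for a constant $c > 0.385$.

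Finally I would handle the query complexity: Theorem~\ref{thm:localSearchGuarantee} gives $O_\eps(nk^2)$ for the local search step, Theorem~\ref{thm:random_greedy_guarantee} gives $O(nk)$ for the Guided Random Greedy step, and summing yields $O_\eps(nk^2)$ overall. As with Theorem~\ref{thm:main}, for small $k$ the $O(k^{-1})$ loss is not negligible, so I would additionally note the standard padding reduction: pass to a blown-up ground set $\mathcal{N}_\rho$ with a multilinear-type objective $g$, apply the large-$k$ result to get a $0.385$-approximate fractional-type solution, and round it back with Pipage Rounding~\cite{calinescu2011calinescu} using only a constant number of extra queries.

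The main obstacle I expect is not any of the subroutine guarantees — those are already proved — but rather verifying that the convex-combination coefficients actually admit a feasible choice of $(p_1, p_2, p_3, t_s)$ pushing the constant strictly above $0.385$; this is the delicate optimization done in~\cite{buchbinder2019constrained}, and the cleanest route is to invoke it as a black box rather than re-derive the optimal parameter values. A secondary, more mechanical concern is bookkeeping the $O(k^{-1})$ error terms consistently through the simplification of each coefficient so that they collapse into a single $O(k^{-1})\f{\OPT}$ term at the end, exactly mirroring the argument in the proof of Lemma~\ref{lem:almost_main}.
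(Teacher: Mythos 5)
Your proposal is correct and follows essentially the same route as the paper, which proves Theorem~\ref{thm:main_simple} by deferring the approximation guarantee to the convex-combination argument in the proof of Theorem~\ref{thm:main} (Lemma~\ref{lem:almost_main}) and the query count to Theorem~\ref{thm:localSearchGuarantee}; you have simply unpacked that deferred argument, correctly substituting Theorem~\ref{thm:random_greedy_guarantee} for the accelerated greedy bound and noting the absence of the extra $\eps$-error terms. The padding/Pipage Rounding step for small $k$ that you include is also exactly how the paper handles that case in the proof of Theorem~\ref{thm:main}.
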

\begin{proof}
    Aside from the number of queries required by this algorithm, which follows directly from Theorem~\ref{thm:localSearchGuarantee}, the approximation guarantee follows directly from the proof Theorem~\ref{thm:main}, which can be found in Appendix~\ref{sec:proofs_main}. 
\end{proof}
\section{Proof of technical theorems and lemmata}~\label{sec:proofs_main}
In this section, we will prove Theorem~\ref{thm:main_simple}. As explained in Section~\ref{sec:main_result}, the proof of Theorem~\ref{thm:main_simple} is based on two Theorems, namely Theorem~\ref{thm:fast_local_search}, and Theorem~\ref{thm:lazy_greedy_guarantee}. We first start by proving these Theorems, and immediately after that we will prove Theorem~\ref{thm:main_simple}. 
\subsection{Proof of Theorem~\ref{thm:fast_local_search}}
In this section, we prove Theorem~\ref{thm:fast_local_search}. We begin with the following lemma. We assume without loss of generality that $\OPT$ is of size $k$ (otherwise, we add to $\OPT$ dummy elements).

\begin{lemma}
\label{lem:fast_local_search_per_outer_iteration}
If the set $S_0$ provides $c$-approximation, then each iteration of the loop starting on Line~\ref{line:attempts_loop} in Algorithm~\ref{alg:faster_local_search} that starts returns a set with probability at least $k / (c\eps(1 - \nicefrac{1}{e})L)$. Moreover, when this happens, the output set $S$ returned obeys
\[\f{S} \geq \frac{\f{S \cap \OPT} + \f{S \cup \OPT}}{2 + \eps} \quad \text{and} \quad  \f{S} \geq \frac{\f{S \cap \OPT}}{1 + \eps}\enspace.\]
\end{lemma}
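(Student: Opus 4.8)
\textbf{Proof proposal for Lemma~\ref{lem:fast_local_search_per_outer_iteration}.}

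The plan is to analyze a single iteration $j$ of the outer loop on Line~\ref{line:attempts_loop}, conditioning on the (high probability) event that $S_0$ provides $c$-approximation. I would split the argument into two independent parts: first, establishing that \emph{when} the condition on Line~\ref{line:condition} holds for the chosen $i^*$, the output set $S = S^j_{i^*}$ obeys the two desired inequalities; and second, lower-bounding the \emph{probability} that this condition holds for the uniformly random $i^*$.

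For the first part, I would mimic the classical local search analysis reproduced in Lemma~\ref{lem:localsearch}, but replace the role of the \say{no improving swap/deletion exists} hypothesis with the weaker averaged condition on Line~\ref{line:condition}. Concretely, the condition on Line~\ref{line:condition} says that for every $t$, $\max_{|S|=t, S \subseteq \mathcal{N} \setminus S^j_{i^*}} \sum_{u \in S} f(u \mid S^j_{i^*}) \leq \min_{|S|=t, S \subseteq S^j_{i^*}} \sum_{v \in S} f(v \mid S^j_{i^*} - v) + \eps f(S^j_{i^*})$. Taking $t = |\OPT \setminus S^j_{i^*}| = |S^j_{i^*} \setminus \OPT|$ (equal since $|S^j_{i^*}| = |\OPT| = k$ after padding with dummies), and using $\OPT \setminus S^j_{i^*}$ on the left and $S^j_{i^*} \setminus \OPT$ on the right, submodularity gives $\sum_{u \in \OPT \setminus S^j_{i^*}} f(u \mid S^j_{i^*}) \geq f(S^j_{i^*} \cup \OPT) - f(S^j_{i^*})$ and $\sum_{v \in S^j_{i^*} \setminus \OPT} f(v \mid S^j_{i^*} - v) \leq f(S^j_{i^*}) - f(S^j_{i^*} \cap \OPT)$. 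Chaining these through the Line~\ref{line:condition} inequality yields $f(S^j_{i^*} \cup \OPT) - f(S^j_{i^*}) \leq f(S^j_{i^*}) - f(S^j_{i^*} \cap \OPT) + \eps f(S^j_{i^*})$, which rearranges to the first desired inequality $f(S^j_{i^*}) \geq (f(S^j_{i^*} \cap \OPT) + f(S^j_{i^*} \cup \OPT))/(2+\eps)$. For the second inequality, I would instead only use the deletion side of Line~\ref{line:condition} together with the trivial bound $0 \leq \max_{|S|=t}\sum_{u} f(u\mid S^j_{i^*})$ (so the left side is nonnegative), giving $0 \leq f(S^j_{i^*}) - f(S^j_{i^*}\cap\OPT) + \eps f(S^j_{i^*})$, i.e.\ $f(S^j_{i^*}) \geq f(S^j_{i^*}\cap\OPT)/(1+\eps)$. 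This part is essentially bookkeeping once the right $t$ is chosen.

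For the second part — the probability bound — I would use a potential/telescoping argument over the $L$ inner iterations. The key observation is that whenever the condition on Line~\ref{line:condition} \emph{fails} for a particular $S^j_{i-1}$, there exists some $t$ and a corresponding set of $t$ elements to add whose total marginal exceeds the total marginal of the $t$ cheapest elements to delete by more than $\eps f(S^j_{i-1})$; picking the best single add/delete pair (as the algorithm does in Steps (i)--(iii)) then increases $f$ by at least roughly $\tfrac{\eps}{k} f(S^j_{i-1})$ in expectation — here the sampling of $n/k$ elements costs a $(1 - 1/e)$ factor in the probability of capturing a good element, which is where the $(1-\nicefrac 1e)$ in the bound comes from, and the factor $1/k$ comes from comparing a sum over $t \le k$ elements to a single swap. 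Hence, at each inner iteration where Line~\ref{line:condition} would fail, $f$ grows multiplicatively in expectation by $(1 + \Omega(\eps/k))$. Since $f$ is bounded above by $f(\OPT) \le f(S_0)/c$ and below by $f(S_0) \ge c \cdot f(\OPT)$, the number of inner iterations at which the condition can fail is $O(k/\eps)$ in an appropriate averaged sense; therefore, for a uniformly random $0 \le i^* < L$, the condition holds with probability at least $1 - O(k/(\eps L))$, and more carefully $\Omega(k/(c\eps(1-\nicefrac1e)L))$ after tracking the constants — this matches the claimed bound $k/(c\eps(1-\nicefrac1e)L)$.

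The main obstacle I anticipate is the second part: making the \say{expected multiplicative increase per failing iteration} rigorous. The subtlety is that the algorithm only performs a swap if it \emph{strictly} increases $f$, and the random sample $Z^j_i$ of size $n/k$ must be shown to contain, with probability $\ge 1 - 1/e$, an element $u$ whose marginal to $S^j_{i-1}$ is within a constant factor of the best available marginal — this requires care because the \say{good} elements form a set of size $\ge t$ for some $t$, and one must average over $t$ and relate the expected gain of the actual best-in-sample swap to $\tfrac\eps k f(S^j_{i-1})$. I would handle this by fixing, for a failing $S^j_{i-1}$, the witnessing value $t$ and the corresponding \say{add} set $O' \subseteq \mathcal{N}\setminus S^j_{i-1}$ of size $t$ and \say{delete} set $D' \subseteq S^j_{i-1}$ of size $t$; since $|O'| = t \ge 1$ and $|\mathcal N| = \Theta(n)$ with the sample of size $n/k$, a union-type bound shows $\Pr[Z^j_i \cap O' \ne \emptyset] \ge 1 - (1 - t/n)^{n/k} \ge$ something like $(1-\nicefrac1e)\cdot t/k$ when $t \le k$; conditioned on hitting $O'$, the best element of $Z^j_i$ has marginal at least the average marginal over $O'$, and pairing it with the cheapest deletion $v^j_i$ (whose removal cost is at most the average over $D'$) gives the required expected gain of $\gtrsim \tfrac\eps k f(S^j_{i-1})$ from Line~\ref{line:condition} failing. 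Once this per-iteration claim is in hand, the global bound follows from a standard argument on geometric growth of a bounded potential.
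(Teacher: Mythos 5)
Your proposal is correct and follows essentially the same route as the paper's proof: part one derives both inequalities from the Line~\ref{line:condition} condition with $t=\abs{\OPT\setminus S^j_{i^*}}$ exactly as the paper does (the paper justifies the non-negativity of the max via the dummy elements, which is the "trivial bound" you invoke), and part two is the paper's potential argument verbatim — witnessing add/delete sets when the condition fails, a $(1-\nicefrac{1}{e})\abs{T_+}/k$ hitting probability for the sample, an averaged conditional gain of at least $c\eps\f{\OPT}/\abs{T_+}$, and a telescoping bound on $\sum_\ell \pr{\bar{A^j_\ell}}$ giving the stated failure probability for a uniform $i^*$. The subtleties you flag (the swap being performed only when improving, and relating the best-in-sample element to the average over the witnessing set) are resolved in the paper exactly as you propose.
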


\begin{proof}
For every two integers $0 \leq i < L$ and integer $1 \leq j \leq \ceil{\log{\frac{1}{\eps}}}$, we denote by $A_i^j$ the event that the set $S_{i, j}$ obeys the condition on Line~\ref{line:condition} of Algorithm~\ref{alg:faster_local_search}, i.e., the event that for every integer $0 \leq t \leq k$ it holds that
\[
\max\limits_{S \subseteq \mathcal{N} \setminus S^j_{i}, \abs{S} = t} \sum\limits_{u \in S} \f{u \mid S^j_{i}} \leq \min_{S \subseteq S^j_{i}, \abs{S} = t} \sum\limits_{v \in S} \f{v \mid S^j_{i} - v} + \eps\f{S^j_{i}}\enspace.
\]
To better understand the implication of event $A^j_i$, assume that such an event occurs, and observe that for $t := \abs{\OPT \setminus S^j_i}$, it holds that
\begin{equation}
\label{eq:fast_local_search_implication1}
\begin{split}
\f{S^j_i} - \f{S^j_i \cap \OPT} + \eps \f{S^j_i} &=\sum\limits_{\ell=1}^{\abs{K^j_{L}}} \f{u_\ell\mid J^j_L\cup\{u_1,\ldots,u_{\ell-1}\}} + \eps \f{S^j_i}  \\
&\geq \sum\limits_{\ell=1}^{\abs{K^j_{L}}}\f{u_\ell\mid J^j_{L}\cup K^j_{L} - u_\ell} + \eps \f{S^j_i}\\
&= \sum\limits_{v \in S^j_i \setminus \OPT} \f{v \mid S_i^j - v} + \eps \f{S^j_i} \\
&\geq \min_{S \subseteq S^j_i, \abs{S} = t} \sum\limits_{v \in S} \f{v \mid S - v} + \eps\f{S^j_i} \\
&\geq \max\limits_{S \subseteq \mathcal{N} \setminus S^j_{i}, \abs{S} = t} \sum\limits_{u \in S} \f{u \mid S^j_{i}} \\
&\geq \sum\limits_{u \in \OPT \setminus S^j_{i}} \f{u \mid S^j_{i}}\\
&\geq \f{S^j_i \cup \OPT} - \f{S^j_i}\enspace ,
\end{split}
\end{equation}
where the first equality holds by setting $J^j_{i} := S^j_{i} \cap \OPT$ and $K^j_{i} := \br{u_\ell}_{\ell=1}^{\abs{K_{i}}} = S^j_{i} \setminus \OPT$, the first and last inequalities follow from submodularity of $\f{\cdot}$, the second inequality holds since the fact that $S^j_i$ and $\OPT$ are both of size $k$ implies that $t = |\OPT \setminus S^j_i| = |S^j_i \setminus \OPT|$, and the third inequality holds under the assumption that event $A^j_i$ occurs. Rearranging the last inequality, we get 
\begin{equation*}
\f{S} \geq \frac{\f{S \cup \OPT} + \f{S \cap \OPT}}{2 + \eps}\enspace .
\end{equation*}

In addition, the existence of the dummy elements implies that $\max\limits_{S \subseteq \mathcal{N} \setminus S^j_{i}, \abs{S} = t} \sum\limits_{u \in S} \f{u \mid S^j_{i}} \geq 0$, and plugging this inequality into the previous one yields that the event $A^j_i$ also implies
\begin{equation*}
\f{S^j_i} - \f{S^j_i \cap \OPT} + \eps\f{S^j_i} \geq \min_{S \subseteq S^j_i, \abs{S} = t} \sum\limits_{v \in S} \f{v \mid S^j_i - v} + \eps\f{S^j_i} \geq 0\enspace,
\end{equation*}
and rearranging this inequality gives $\f{S^j_i} \geq \frac{\f{S^j_i \cap \OPT}}{1 + \eps}$.

The above shows that to prove the lemma it suffices to show that the probability that $A^j_{i^*}$ holds is at least $k / (c\eps(1 - \nicefrac{1}{e})L)$. Towards this goal, let us study the implications of the complementary event $\bar{A^j_i}$. Specifically, we would like to lower bound $\E{\f{S^j_{i + 1}} - \f{S^j_i} \mid \bar{A^j_i}}$.

Fix a particular set $S^j_i$ that causes the event $\bar{A^j_i}$ to occur (notice that the occurrence of this event depends only on the set $S^j_i$). Then, there must exist sets $T_+ \subset \mathcal{N} \setminus S^j_i$ and $T_- \subseteq S^j_i$ of size $t \leq k$ such that
\[
\sum\limits_{u \in T_+} \f{u \mid S^j_{i}} > \sum\limits_{v \in T_-} \f{v \mid S^j_{i} - v} + \eps\f{S^j_{i}}\enspace.
\]

We can now define the event $B_i^j$ as the event that $Z^j_{i + 1} \cap T_+ \neq \varnothing$ (notice that the event $B_i^j$ is defined only for this particular set $S^i_j$).
The probability of the event $B^j_i$ is
\[
\begin{split}
\pr{B^j_i \mid S^j_i} \geq 1 - \term{\frac{n - \nicefrac{n}{k}}{n}}^{\abs{T_+}} \geq 1 - e^{-\frac{\abs{T_+}}{k}} \geq \term{1 - \nicefrac{1}{e}} \frac{\abs{T_+}}{k}\enspace,
\end{split}
\]
where the second inequality holds since $\term{1 - \frac{1}{k}}^x \leq e^{-\frac{x}{k}}$ for any $x \geq 0$, and the second inequality holds for any $x \in \sterm{0,1}$ by the concavity of $1 - e^{-x}$. By the law of total expectation and the fact that $\f{S^j_{i + 1}}$ is always at least $\f{S^j_i}$, we now get
\begin{equation*}
\begin{split}
\E{\f{S^j_{i + 1}} - \f{S^j_i} \mid S^j_i} &\geq \pr{B^j_i \mid S^j_i} \E{\f{S^j_{i + 1}} - \f{S^j_i} \mid S^j_i, B^j_i}\\
&\geq  \term{1 - \nicefrac{1}{e}} \frac{\abs{T_+}}{k} \cdot \E{\f{S^j_{i + 1}} - \f{S^j_i} \mid S^j_i, B^j_i}\\ 
&\geq \frac{1 - \nicefrac{1}{e}}{k} \eps c \f{\OPT}\enspace ,
\end{split}
\end{equation*}
where the last inequality holds since
\begin{equation*}
\begin{split}
&\E{\f{S^j_{i + 1}} - \f{S^j_{i}} \mid S^j_i, B^j_i} = \E{\f{S^j_{i} - v_{i + 1}^j + u_{i + 1}^j} - \f{S^j_{i}} \mid S^j_i, B^j_i} \\
&\quad\quad\quad\quad= \E{\f{S^j_{i} - v_{i + 1}^j + u_{i + 1}^j} - \f{S^j_{i} - v_{i + 1}^j}+ \f{S^j_{i} - v_{i + 1}^j} - \f{S^j_{i}} \mid S^j_i, B^j_i}\\ 
&\quad\quad\quad\quad \geq \E{\f{S^j_{i} + u_{i + 1}^j} - \f{S^j_{i}} + \f{S^j_{i} - v_{i + 1}^j} - \f{S^j_{i}} \mid S^j_i, B_i^j} \\
&\quad\quad\quad\quad \geq \E{\f{S^j_{i} + u'} - \f{S^j_{i}} \mid S^j_i, B^j_i} + \E{\f{S^j_{i} - v'} - \f{S^j_{i}} \mid S^j_i, B^j_i}\\
&\quad\quad\quad\quad = \frac{\sum\limits_{u \in T_+} \f{u \mid S^j_{i}}}{\abs{T_+}} -  \frac{\sum\limits_{v \in T_-}\f{v \mid S_{i}^j - v}}{\abs{T_-}}\\
&\quad\quad\quad\quad \geq \frac{\eps\f{S^j_{i}}}{\abs{T_+}}\\
&\quad\quad\quad\quad \geq  \frac{c\eps\f{\OPT}}{\abs{T_+}}\enspace,
\end{split}
\end{equation*}
where the first inequality holds by submodularity of $\f{\cdot}$, the second inequality holds by the way Algorithm~\ref{alg:faster_local_search} chooses $u^j_i$ and $v^j_i$ if we let $u'$ be a uniformly random element of $T_+ \cap Z^j_{i + 1}$ and $v'$ be a uniformly random element of $T_-$, the penultimate inequality holds by our assumption that $S^j_i$ implies the event $\bar{A^j_i}$, and finally, the last inequality holds since it is guaranteed that $\f{S^j_{i}} \geq \f{S^j_0} = \f{S_0} \geq c \cdot f(\OPT)$. 

Since the above bound on the expectation holds conditioned on every set $S_i^j$ that implies the event $\bar{A^j_i}$, it holds (by the law of total expectation) also conditioned on the event $\bar{A^j_i}$. Adding this lower bound for all $i$ values, and using the non-negativity of $f$, we get
\begin{equation*}
\begin{split}
\E{\f{S^j_L}} &\geq \sum\limits_{\ell = 0}^{L - 1} \E{\f{S^j_{\ell + 1}} - \f{S^j_{\ell}}} \geq \sum\limits_{\ell = 0}^{L - 1} \pr{\bar{A^j_\ell}} \E{\f{S^j_{\ell + 1}} - \f{S^j_{\ell}} \mid \bar{A^j_\ell}}\\
&\geq \frac{1 - \nicefrac{1}{e}}{k}c\eps\f{\OPT} \sum\limits_{\ell = 1}^L \pr{\bar{A^j_\ell}}\enspace .
\end{split}
\end{equation*}
Combining the last inequality with the fact that $\f{S^j_L}$ is deterministically at most $\f{\OPT}$, it must hold that
\begin{equation*}
\begin{split}
1 \geq \frac{1 - \nicefrac{1}{e}}{k}c\eps \sum\limits_{\ell = 1}^L \pr{\bar{A^j_\ell}} \quad \Rightarrow \quad  \sum\limits_{\ell = 1}^L \pr{\bar{A^j_\ell}} \leq \frac{k}{c\eps \term{1 - \nicefrac{1}{e}}}\enspace .
\end{split}
\end{equation*}
Hence, the probability that the event $A_{i^*}^j$ does not hold for a uniformly random $i^* \in [L]$ is    
\[
\frac{\sum_{\ell = 1}^L \pr{\bar{A_\ell^j}}}{L} \leq \frac{k}{c\eps\term{1 - \nicefrac{1}{e}} L}\enspace.
\qedhere
\]
\end{proof}

\thmfastlocalsearch*

\begin{proof}
As mentioned above, we initialize the set $S_0$ using $\lceil \log_2 \eps^{-1} \rceil$ executions of the Sample Greedy algorithm of~\cite{buchbinder2017comparing}. This algorithms returns a set that provides $(1/e - \eps')$-approximation in expectation for a fixed constant $\eps'$ using $O_{\eps'}(n)$ queries. Setting $\eps' = 1/e - 1/4$, we get by Markov's inequality that the output set of Sample Greedy provides at least $1/8$-approximation with probability at least $1/2$. Thus, by selecting the best set produced by $\lceil \log_2 \eps^{-1} \rceil$ executions of Sample Greedy, we get a set $S_0$ that provides at least $c:=1/8$-approximation with probability at least $1 - \eps$. Furthermore, this requires only $O_\eps(n)$ queries.

Assume that the set $S_0$ indeed provides $c$-approximation, and let us now set $L := \left\lceil\frac{2k}{c\eps\term{1 - \nicefrac{1}{e}}}\right\rceil$ in Algorithm~\ref{alg:faster_local_search}. Then,  Lemma~\ref{lem:fast_local_search_per_outer_iteration} guarantees that every iteration of the outer loop of the algorithm returns a set (obeying the requirement of the theorem) with probability at least $1/2$. Hence, by repeating this loop $\lceil \log_2 \eps^{-1}\rceil$ times, we are guaranteed that Algorithm~\ref{alg:faster_local_search} outputs a set with probability at least $1 - \eps$ (condition on the set $S_0$ providing $c$-approximation).

Combining the results of the last two paragraphs, we get that Algorithm~\ref{alg:faster_local_search} succeeds in outputing a set $S$ obeying the inequalities of the theorem with probability at least $(1 - \eps)^2 \geq 1 - 2\eps$. This probability can be improved to $1 - \eps$, as required by the theorem, by halving the value of $\eps$.
To complete the proof of the theorem, it only remains to observe that for the above choices of $c$ and $L$, Algorithm~\ref{alg:faster_local_search} can be implemented using only $O_\eps(n + k^2)$ queries to the objective function.
\end{proof}

\subsection{Proof of Lemma~\ref{lem:expectation_bound_lazy_greedy}}

One can verify that Algorithm~\ref{alg:main_lazier_than_lazy_greedy} guarantees that, for any $i \in [k]$, the probability of any element $u \in \mathcal{N}$ to belong to $S_i$ is at most $1 - \term{1 - \frac{1}{k}}^{i-1}$ (see the analysis of Sample Greedy in~\cite{buchbinder2017comparing} for details), which is similar to the guarantee on the probabilities in Algorithm~\ref{alg:main_random_greedy}. This means that the proof of Lemma~\ref{lem:S_i_union_A} applies also to Algorithm~\ref{alg:main_lazier_than_lazy_greedy}, and thus, we can use this lemma to prove the following result.

\begin{restatable}{lemma}{expectationBoundLazyGreedy}
\label{lem:expectation_bound_lazy_greedy}
Let $\eps \in (0,1)$, $k$ a positive integer and $\alpha = 1 - \frac{1}{k}$. Then for every positive integer $i \leq \ceil{t_s \cdot k}$
\begin{equation*}
\E{\f{S_i}} \mspace{-2mu}\geq\mspace{-2mu} \term{1 - \alpha^i\mspace{-2mu}}\f{\OPT \setminus Z} \mspace{-2mu}-\mspace{-2mu} \term{1 - \alpha^{i} - \frac{i\alpha^{i-1}}{k}} \mspace{-2mu}\f{\OPT \cup Z} \mspace{-2mu}+\mspace{-2mu} \alpha^i \f{S_0} 
\mspace{-2mu}-\mspace{-2mu} \frac{2\eps \sum\limits_{l=1}^i\alpha^{l-1}}{k}\mspace{-2mu}\enspace,
\end{equation*}
and for every positive integer $i \in \sterm{\ceil{t_s \cdot k} + 1, k}$, 
\begin{equation*}
\begin{split}
\E{\f{S_i}} &\geq \frac{i - \ceil{t_s \cdot k}}{k} \alpha^{i - \ceil{t_s \cdot k} - 1}\f{\OPT} - \frac{i -\ceil{t_s \cdot k}}{k} \term{\alpha^{i - \ceil{t_s \cdot k} - 1} - \alpha^{i-1}}\f{\OPT \cup Z} \\
&\quad + \alpha^{i - \ceil{t_s \cdot k}} \f{S_{\ceil{t_s \cdot k}}}  - \frac{2\eps\sum\limits_{l=\ceil{t_s k} + 1}^{i} \alpha^{l-\ceil{t_s k} - 1}}{k}\f{\OPT} \enspace.
\end{split}
\end{equation*}

\end{restatable}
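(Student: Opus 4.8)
The plan is to follow the structure of the proof of Lemma~\ref{lem:expectation_bound} (which analyzed the exact Guided Random Greedy, Algorithm~\ref{alg:main_random_greedy}) and to carry along, throughout, an additive error term accounting for the fact that Algorithm~\ref{alg:main_lazier_than_lazy_greedy} selects its element from a random sample $M_i$ rather than from an exact top-$k$ set. Concretely, I would first establish a per-iteration guarantee of the form
\[
\E{\f{S_i} - \f{S_{i-1}} \mid S_{i-1}} \geq \tfrac{1}{k}\term{\f{S_{i-1} \cup A} - \f{S_{i-1}}} - \tfrac{2\eps}{k}\f{\OPT}\enspace,
\]
where $A = \OPT \setminus Z$ in the first phase ($i \leq \ceil{t_s \cdot k}$, where sampling is restricted to $\mathcal{N} \setminus Z$) and $A = \OPT$ afterwards. \textbf{Proving this inequality is the main obstacle}, and it is essentially the Sample Greedy / Lazier-than-Lazy-Greedy analysis of~\cite{buchbinder2017comparing,mirzasoleiman2015lazier} adapted to the guided setting: conditioning on $S_{i-1}$, one sorts the candidate elements by marginal value to $S_{i-1}$, and the random threshold $d_i$ together with the sample size $p = 8k^{-1}\eps^{-2}\ln\term{2\eps^{-1}}$ guarantees, via a Chernoff bound on how many of the top elements land in $M_i$, that $u_i$ behaves (up to a total-variation error that contributes the $\tfrac{2\eps}{k}\f{\OPT}$ term) like a uniformly random element among the top $k$ candidates. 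The non-negativity test on $\f{u_i \mid S_{i-1}}$ plays the role the zero-marginal dummy elements play for Algorithm~\ref{alg:main_random_greedy}: since $\abs{A} \leq k$, it lets us compare against $A$ padded to exactly $k$ candidates with no loss, so that the relevant top-$k$ sum dominates $\sum_{u \in A} \f{u \mid S_{i-1}} \geq \f{S_{i-1} \cup A} - \f{S_{i-1}}$ by submodularity.

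Given this per-iteration bound, I would unfix $S_{i-1}$ and take expectations to obtain the recurrence
\[
\E{\f{S_i}} \geq \tfrac{1}{k}\E{\f{S_{i-1} \cup A}} + \alpha \E{\f{S_{i-1}}} - \tfrac{2\eps}{k}\f{\OPT}\enspace,
\]
and then substitute the bound on $\E{\f{S_{i-1} \cup A}}$ from Lemma~\ref{lem:S_i_union_A} — which applies here because, as the excerpt notes, the element-inclusion probabilities of Algorithm~\ref{alg:main_lazier_than_lazy_greedy} coincide with those of Algorithm~\ref{alg:main_random_greedy}. With $A = \OPT \setminus Z$ (so $\beta_{i-1} = 0$) this reproduces, up to the new $-\tfrac{2\eps}{k}\f{\OPT}$ term, the phase-1 recurrence used in the proof of Lemma~\ref{lem:expectation_bound}; with $A = \OPT$ it reproduces the phase-2 recurrence.

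It then remains to solve the two recurrences by induction on $i$, exactly as in the proof of Lemma~\ref{lem:expectation_bound}, while checking that the error terms telescope as claimed. In phase~1 the inductive step multiplies the accumulated error by $\alpha$ and adds a fresh $\tfrac{2\eps}{k}\f{\OPT}$, so $\alpha \cdot \tfrac{2\eps \sum_{l=1}^{i-1} \alpha^{l-1}}{k} + \tfrac{2\eps}{k} = \tfrac{2\eps \sum_{l=1}^{i} \alpha^{l-1}}{k}$, matching the stated bound, and the base case $i = 1$ is immediate from the recurrence. Phase~2 is handled the same way, starting the induction at $i = \ceil{t_s \cdot k} + 1$ from the set $S_{\ceil{t_s \cdot k}}$, which is why the phase-2 error summation is re-indexed to begin at $\ceil{t_s \cdot k} + 1$ and is measured against $\f{\OPT}$. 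Aside from the sampling-error estimate in the first paragraph, every step here is the algebra already carried out for Lemma~\ref{lem:expectation_bound} with one extra bookkeeping term.
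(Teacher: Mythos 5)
Your proposal matches the paper's proof essentially step for step: the paper also establishes the per-iteration bound by invoking the Sample Greedy analysis of~\cite{buchbinder2017comparing} (Lemma 4.3 there, stated multiplicatively as $\E{\f{u_i \mid S_{i-1}}} \geq \frac{1-\eps}{k}(\f{A \cup S_{i-1}} - \f{S_{i-1}})$ and then converted via submodularity into the additive $-\frac{2\eps}{k}\f{\OPT}$ form you wrote down), then derives the same recurrence, substitutes Lemma~\ref{lem:S_i_union_A} for the two phases, and closes by the same induction with the telescoping error term. The only cosmetic difference is that you sketch a Chernoff-style re-derivation of the sampling lemma where the paper simply cites it.
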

\begin{proof}
Fix $i \in [k]$, and let $E_{i}$ be the event of fixing all the random decisions in the algorithm up to iteration $i-1$ (including). Let $\mathcal{E}_i$ be the set of all possible events $E_i$. Following the analysis of the \emph{Sample Greedy} algorithm~\cite{buchbinder2017comparing}, specifically Lemma 4.3, we obtain that 
\begin{equation*}
\begin{split}
\E{\f{u_i \mid S_{i-1}}} \geq \frac{1-\eps}{k} \term{\f{A \cup S_{i-1}} - \f{S_{i-1}}}\enspace,
\end{split}
\end{equation*}
where $A := \OPT \setminus Z$ when $i \leq \ceil{t_s \cdot k}$, and $A := \OPT$ otherwise. Furthermore, note that since $\f{S_{i-1}}$ is independent of $M_i$, then it holds that $\E{\f{u_i \mid S_{i-1}}} = \E{\f{S_i} - \f{S_{i-1}}} = \E{\f{S_i}} - \f{S_{i-1}}$. This gives
\begin{equation*}
\begin{split}
\E{\f{S_i}} &\geq \term{1 - \alpha - \frac{\eps}{k}} \f{S_{i-1} \cup A} + \term{\alpha 
 + \frac{\eps}{k}}\f{S_{i-1}} \\
 &\geq \term{1 - \alpha}\f{S_{i-1} \cup A} + \alpha{\f{S_{i-1}}} - \frac{2\eps}{k}\f{\OPT}\enspace,
\end{split}
\end{equation*}
where $\alpha := 1 - \frac{1}{k}$, and the second inequality holds by submodularity of $\f{\cdot}$.

Unfixing event $E_i$ and $i$, yields that for every $i \in [k]$
\begin{equation}
\label{eq:lazier_recurrence}
\begin{split}
\E{\f{S_i}} \geq \term{1 - \alpha}\E{\f{S_{i-1} \cup A}} + \alpha\E{\f{S_{i-1}}} - \frac{2\eps}{k}\f{\OPT}\enspace.
\end{split}
\end{equation}

To obtain the guarantees of Lemma~\ref{lem:expectation_bound_lazy_greedy}, we inspect two main cases.

\paragraph{Handling the case of $i \leq \ceil{t_s \cdot k}$.} Combining the result of plugging $A := \OPT \setminus Z$  into Lemma~\ref{lem:S_i_union_A} with~\eqref{eq:lazier_recurrence} yields that 
\begin{equation}
\label{eq:expectation_S_i_recurrence_relation_lazy}
\begin{split}
\E{\f{S_i}} &\geq \term{1 - \alpha} \f{\OPT \setminus Z} - \frac{1}{k}\term{1 - \alpha^{i-1}}\f{\OPT \cup Z} \\
&\quad+\alpha \E{\f{S_{i-1}}} -\frac{2\eps}{k}\f{\OPT}\enspace.
\end{split}
\end{equation}

We now prove Lemma~\ref{lem:expectation_bound_lazy_greedy} for the case above via induction over every positive integer $i \leq \ceil{t_s \cdot k}$. The lemma holds trivially when plugging $i := 1$ into~\eqref{eq:expectation_S_i_recurrence_relation_lazy}.  Assume that for every $ 1 \leq j < i$, it holds that 
\begin{equation*}
\begin{split}
\E{\f{S_j}} &\geq \term{1 - \alpha^{j}} \f{\OPT \setminus Z} - \term{1 - \alpha^{j} - jk^{-1}\alpha^{j-1}} \f{\OPT \cup Z} \\
&\quad + \alpha^{j} \f{S_0} - \frac{2\eps \sum\limits_{l=1}^j\alpha^{l-1}}{k}\f{\OPT}\enspace.
\end{split}
\end{equation*}
Let us prove it for $i > 1$.

\begin{equation*}
\begin{split}
\E{\f{S_i}} &\geq \frac{1}{k}\f{\OPT \setminus Z} - \frac{1}{k}\term{1 - \alpha^{i-2}}\f{\OPT \cup Z} + \alpha\E{\f{S_{i-1}}} - \frac{2\eps}{k}\f{\OPT}\\
&\geq  \frac{1}{k}\f{\OPT \setminus Z} - \frac{1}{k}\term{1 - \alpha^{i-2}}\f{\OPT \cup Z} + \alpha \left( \vphantom{\frac{2\eps\sum\limits_{l=1}^{i-1}\alpha^{l-1}}{k}}\term{1 - \alpha^{i-1}} \f{\OPT \setminus Z}  \right.\\
&-\left. \term{1 - \alpha^{i-1} - (i-1)k^{-1}\alpha^{i-1}} \f{\OPT \cup Z} +
\alpha^{i-1} \f{S_0} - \frac{2\eps\sum\limits_{l=1}^{i-1}\alpha^{l-1}}{k}\f{\OPT} \right) \\
&= \term{1 - \alpha^i}\f{\OPT \setminus Z} - \term{\alpha - \alpha^{i} - \frac{i-1}{k}\alpha^{i-1} + \frac{1}{k} - \frac{\alpha^{i-1}}{k}} \f{\OPT \cup Z} \\
&\quad\quad  + \alpha^{i}\f{S_0} - \frac{2\eps\sum\limits_{l=1}^i \alpha^{l-1}}{k}\f{\OPT}\\
&= \term{1 - \alpha^{i}}\f{\OPT \setminus Z} - \term{1 - \alpha^i - \frac{i}{k}\alpha^{i-1}}\f{\OPT \cup Z} + \alpha^{i} \f{S_0} \\
&\quad\quad - \frac{2\eps\sum\limits_{l=1}^i \alpha^{l-1}}{k}\f{\OPT}\enspace,
\end{split}    
\end{equation*}
where the equalities hold by definition of $\alpha$.

\paragraph{Handling the case of $i > \ceil{t_s \cdot k}$.}
Plugging $A := \OPT $  into Lemma~\ref{lem:S_i_union_A} yields that
\begin{equation*}
\begin{split}
\E{\f{S_i}} &\geq \frac{\alpha^{i - \ceil{t_s \cdot k} - 1}}{k}\f{\OPT} - \frac{1}{k}\term{\alpha^{i - \ceil{t_s \cdot k} - 1} - \alpha^{i-1}} \f{\OPT \cup Z} \\
&\quad + \alpha \f{S_{i-1}} - \frac{2\eps}{k}\f{\OPT}\enspace.  
\end{split}
\end{equation*}

Similarly, we prove Lemma~\ref{lem:expectation_bound_lazy_greedy} for this case using induction over $i$. For the case of $i = \ceil{t_s \cdot k} + 1$, it holds that 
\[
\E{\f{S_i}} \geq \frac{1}{k}\f{\OPT} - \frac{1}{k}\term{1 - \alpha^{\ceil{t_s \cdot k}}} \f{\OPT \cup Z} + \alpha \f{S_{\ceil{t_s \cdot k}}} - \frac{2\eps}{k}\f{\OPT}\enspace.
\]

Assume that for every $\ceil{t_s \cdot k} + 1 \leq j < i$, it holds that 
\begin{equation*}
\begin{split}
\E{\f{S_j}} &\geq \frac{j - \ceil{t_s \cdot k}}{k}\cdot \alpha^{j - \ceil{t_s \cdot k} - 1}\cdot \f{\OPT} - \frac{j - \ceil{t_s \cdot k}}{k}\cdot\term{\alpha^{j - \ceil{t_s \cdot k} - 1} - \alpha^{i-1}}\cdot \f{\OPT \cup Z} \\
&\quad\quad + \alpha^{j - \ceil{t_s \cdot k}} \cdot\f{S_{\ceil{t_s \cdot k}}} - \frac{2\eps\sum\limits_{l=\ceil{t_s k} + 1}^{j} \alpha^{l-\ceil{t_s k} - 1}}{k}\cdot\f{\OPT}\enspace.
\end{split}
\end{equation*}
Let us prove it now for $i > \ceil{t_s \cdot k} + 1$.

\begin{equation*}
\begin{split}
\E{\f{S_i}} &\geq \frac{\alpha^{i - \ceil{t_s \cdot k} - 1}}{k}\f{\OPT} - \frac{1}{k}\term{\alpha^{i - \ceil{t_s \cdot k} - 1} - \alpha^{i-1}} \f{\OPT \cup Z} + \alpha \f{S_{i-1}} \\
&\quad - \frac{2\eps}{k}\f{\OPT}\\
&\geq  \frac{i - \ceil{t_s \cdot k}}{k} \alpha^{i - \ceil{t_s \cdot k} - 1}\f{\OPT} \\
&\quad - \frac{1}{k}\term{\alpha^{i - \ceil{t_s \cdot k} - 1} - \alpha^{i-1} + \term{i - 1 - \ceil{t_s \cdot k}}\term{\alpha^{i - \ceil{t_s \cdot k} -1} - \alpha^{i - 1}}}\f{\OPT \cup Z} \\
&\quad + \alpha^{i - \ceil{t_s \cdot k}} \cdot \f{S_{\ceil{t_s \cdot k}}}  - \frac{2\eps\sum\limits_{l=\ceil{t_s k} + 1}^{i-1} \alpha^{l-\ceil{t_s k} - 1}}{k}\f{\OPT}\\
&= \frac{i - \ceil{t_s \cdot k}}{k} \alpha^{i - \ceil{t_s \cdot k} - 1}\f{\OPT} - \frac{i -\ceil{t_s \cdot k}}{k} \term{\alpha^{i - \ceil{t_s \cdot k} - 1} - \alpha^{i-1}}\f{\OPT \cup Z} \\
&\quad\quad + \alpha^{i - \ceil{t_s \cdot k}} \cdot \f{S_{\ceil{t_s \cdot k}}}  - \frac{2\eps\sum\limits_{l=\ceil{t_s k} + 1}^{i-1} \alpha^{l-\ceil{t_s k} - 1}}{k}\f{\OPT}\enspace. \qedhere
\end{split}    
\end{equation*}
\end{proof}

\subsection{Proof of Theoream~\ref{thm:lazy_greedy_guarantee}}
Following the previous subsection, we are now ready to prove Theorem~\ref{thm:lazy_greedy_guarantee}.

\lazierGreedy*

\begin{proof}[Proof of Theorem~\ref{thm:lazy_greedy_guarantee}]
First, note that by submodularity and non-negativity of $\f{\cdot}$, it holds that for every $A \subseteq \mathcal{N}$,
\begin{equation} \label{eq:seperation}
\f{\OPT \setminus A} \geq \f{\emptyset} + \f{\OPT} - \f{\OPT \cap A} \geq \f{\OPT} - \f{\OPT \cap A}\enspace.
\end{equation}

The lower bound in Theorem~\ref{thm:lazy_greedy_guarantee} follows by combining three inequalities: the inequality arising by plugging $i := \ceil{t_s \cdot k}$ into the first case of Lemma~\ref{lem:expectation_bound_lazy_greedy}, the inequality arising by plugging $i := k$ into the second case of Lemma~\ref{lem:expectation_bound_lazy_greedy}, and Inequality~\eqref{eq:seperation} for $A := Z$. Note that we have removed all terms involving $\f{S_0}$ as they are non-negative.

To conclude the proof of Theorem~\ref{thm:lazy_greedy_guarantee}, observe that Algorithm~\ref{alg:main_lazier_than_lazy_greedy} at each iteration $i \in [k]$ samples $O\term{\frac{n\ln{\eps^{-1}}}{k\eps^{2}}}$ items, where the marginal gain is computed separately for each of the sampled elements with respect to the set $S_{i-1}$. This process is repeated $O\term{k}$ times, thus, resulting in a total of $O\term{n\eps^{-2}\ln{\eps^{-1}}}$ queries of the objective function.
\end{proof}

\section{Additional experiments}
\subsection{Image summarization}

\begin{figure}[!htb]
    \centering
    \includegraphics[width=\textwidth]{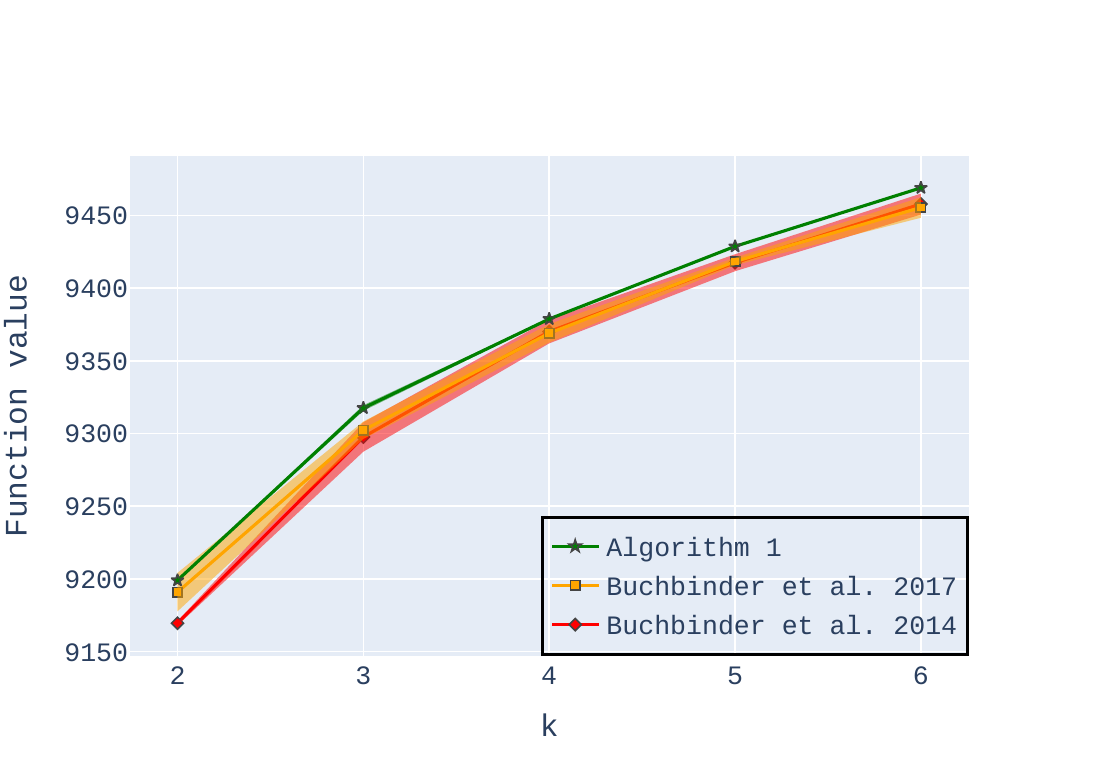}
    \caption{Results for a varying number of images $k$ with respect to the Image summarization problem on the Tiny ImageNet dataset.}
    \label{fig:enter-label}
\end{figure}

\end{document}